\setlist[enumerate]{leftmargin=.5in}
\setlist[itemize]{leftmargin=.5in}
\theoremstyle{plain}
\newtheorem{theorem}{Theorem}[section]
\newtheorem{proposition}[theorem]{Proposition}
\newtheorem{lemma}[theorem]{Lemma}
\newtheorem{corollary}[theorem]{Corollary}
\theoremstyle{definition}
\newtheorem{definition}[theorem]{Definition}
\newtheorem{assumption}[]{Assumption}
\theoremstyle{remark}
\author[1]{Martin Rouault\thanks{Corresponding author: \href{mailto: rouault.martin@protonmail.com}{rouault.martin@protonmail.com}}}
\author[1]{Rémi Bardenet}
\author[2]{Mylène Ma\"ida}
\affil[1]{\small Univ. Lille, CNRS, Centrale Lille, UMR 9189 -- CRIStAL, 59651 Villeneuve d'Ascq, France}
\affil[2]{\small Univ. Lille, CNRS, UMR 8524 -- Laboratoire Paul Painlevé, F-59000 Lille, France}
\def\PP{\mathbb{P}_{n,\beta_n}^V}
\renewcommand\d{\mathrm{d}}
\newcommand\mydots{\makebox[1em][c]{.\hfil.\hfil.\hfil}}
\def\threefig{0.32\textwidth}
\def\twofig{0.5\textwidth}
\newcommand\rev[1]{#1}
\title{Monte Carlo with kernel-based Gibbs measures:\\ Guarantees for probabilistic herding.}
\begin{document}

\maketitle
\begin{abstract}
    Kernel herding belongs to a family of deterministic quadratures
    that seek to minimize the maximum mean discrepancy (MMD), that is, 
    the worst-case integration error over a reproducing kernel Hilbert space (RKHS). 
    These MMD minimization procedures come with strong experimental support, but comparatively less theoretical footing.
    In particular, apart from recent progress in distribution compression, little has been proved in favor of an improvement of MMD minimization over classical Monte Carlo quadrature when the RKHS is infinite-dimensional. 
  In this paper, we study a joint probability distribution over quadrature nodes, a tailored \emph{Gibbs distribution}, whose support intuitively tends to concentrate around MMD minimizers as a temperature parameter is decreased. 
  Our main contribution is to prove that drawing integration nodes from our distribution does outperform i.i.d Monte Carlo. 
  While our bounds on the worst-case integration error feature the same rate as i.i.d. Monte Carlo, we do obtain a tighter concentration inequality as the temperature parameter decreases.
  This means smaller confidence intervals as the number of quadrature nodes increases.
  While arguably a first step, our results demonstrate that the mathematical toolbox developed around Gibbs measures can help understand to what extent kernel herding and its variants improve on computationally cheaper methods.
  There remains the issue of sampling from our Gibbs distribution. 
  In our numerical experiments, we demonstrate that a simple MCMC chain already yields approximate samples that lead to improved confidence intervals around the target integrals, as supported by our theoretical results.  
    
\end{abstract}

    \emph{Keywords:} Monte Carlo integration, interacting particle systems, concentration inequality.

    \emph{MSCcodes:} 65C05, 68W20, 60E15, 82B44, 82B31.

\section{Introduction}
\label{s:introduction}

Numerical integration with respect to a possibly unnormalized distribution $\pi$ on $\mathbb{R}^{d}$ is routine in computational statistics \citep{robert2007bayesian} and probabilistic machine learning \citep{murphy2023probabilistic}.
Monte Carlo algorithms \citep{robert2004monte} are randomized algorithms that tackle this task, defining estimators that rely on $n$ evaluations of the integrand $f$ at suitably chosen random points in $\mathbb{R}^d$, called \emph{nodes}. 
Classical algorithms, such as Markov Chain Monte Carlo (MCMC), come with probabilistic error controls, such as central limit theorems, that involve errors of magnitude $1/\sqrt{n}$.
The popularity of MCMC has justified a continuous research effort to improve on that rate, which is too slow when evaluating the integrand is computationally costly.
This happens, e.g., in Bayesian inference of models involving large systems of differential equations, or Bayesian learning of complex Markov models. 
Indeed, the time needed for one evaluation of the likelihood can range from a few minutes in cardiac electrophysiology models \citep{johnstone2016uncertainty} to a few hours for population models in ecology \citep{purves2013biology}.
Such applications call for Monte Carlo methods that make the most of the $n$ likelihood or integrand evaluations, and one prefers spending a larger computational budget to design better quadrature nodes to increasing $n$.
Quasi-Monte Carlo methods (QMC), for instance, rely on smoothness assumptions to obtain a worst-case error of order $1/n$. 
A common smoothness assumption is that the target integrands $f$ belong to a particular reproducing kernel Hilbert space (RKHS); see e.g. \citep[Section 3]{dick2013qmc}.

At another end of the algorithmic spectrum, variational Bayesian methods (VB; \citealp{blei2017variational}), such as Stein variational gradient descent \citep{liu2016stein}, sacrifice some of the error controls to gain in scalability.
At its core, VB is the minimization of a dissimilarity measure between a candidate approximation and the target distribution $\pi$. 
Minimizing a relative entropy, for instance, yields algorithms amenable to stochastic gradient techniques \citep{hoffman2013stochastic}, yet that usually come with at best loose theoretical guarantees on how well integrals w.r.t. $\pi$ are approximated.

An intermediate method between Monte Carlo and relative entropy-based VB is the minimization of an integral probability metric (IPM) of the form
\begin{equation}
    \label{e:IPM}
    \nu\mapsto I_{K}(\nu - \pi) = \iint K(x, y)\,\mathrm{d}(\nu - \pi)^{\otimes 2}(x, y),
\end{equation}
where $K$ is a positive definite kernel, known as the \emph{interaction kernel}.
It is known \citep{gretton2010hilbert} that the square root of $I_{K}(\nu - \pi)$ in \eqref{e:IPM} is the worst-case integration error for integrands in the unit ball of the RKHS defined by $K$, when approximating $\pi$ by $\nu$; see \citep{pronzato2020bayesian}.
Loosely speaking, minimizing \eqref{e:IPM} is thus an attempt at designing efficient algorithms in the vein of VB, yet that come with a control on the integration error like Monte Carlo and QMC.

Kernel herding, for instance, which rose to attention in the context of learning Markov random fields \citep{welling2012herding, welling2012herding_part, chen2010parametric, bach2012herding}, is a conditional gradient descent that greedily minimizes \eqref{e:IPM}.
One practical limitation is the requirement to evaluate the kernel embedding 
$
    \int K(\cdot, x)\,\mathrm{d}\pi(x).
$
When the support of the target measure $\pi$ is all of $\mathbb{R}^d$, this requirement can be circumvented by a new choice of kernel based on $K$ and $\pi$, namely the \emph{Stein kernel} $K_{\pi}$ \citep{anastasiou2023stein}.
The IPM \eqref{e:IPM} for $K_{\pi}$ coincides with the \emph{kernel Stein discrepancy} (KSD), and gradient descent yields efficient minimization algorithms \citep{korba2021kernel}.
Modifying the kernel comes however at the price of modifying in a nontrivial way the set of integrands involved in the worst-case guarantee, which might be an issue in applications \citep{benard2024kernel}.

Besides the algorithmic limitation of the evaluation of the kernel embedding, the main limitation of IPM minimization algorithms has been theoretical: 
    there are few results that support an improvement over classical Monte Carlo in terms of the worst-case integration error, at least with assumptions general enough to cover practical use.
For instance, there is experimental support in favor of an $n^{-1}$ convergence rate of the worst-case integration error in the RKHS induced by $K$ for the $n-$point output of the algorithm \citep{chen2010super, pronzato2023performance}, yet proving an $o(n^{-1/2})$ decay of the rate in the practically relevant case of an infinite-dimensional RHKS is still rather open.
To our knowledge, the clearest results in that direction are for a two-stage procedure called \emph{kernel thinning} \citep{dwivedi2021generalized, dwivedi2024kernel,shetty2022distributioncompressionnearlineartime} and currently summarize as follows. 
The construction draws $n^2$ points from a well-mixing Markov chain targeting $\pi$, then subsamples the chain's history down to $n$ points in time $\mathcal{O}(n^2)$, up to log terms, to obtain an $n$-point supported measure.
    The worst-case integration error then depends on the kernel, and has been shown to as a small as $\mathcal{O}_{\mathbb{P}}(n^{-1}\log^{d+1} n)$ for Gaussian and inverse multi-quadratic kernels.

In this paper, we initiate a different line of attack 
    towards proving that imposing low values of an IPM yields to more efficient quadratures than classical Monte Carlo.
We propose to relax the objective of minimizing a worst-case integration error and rather sample from a probability distribution that tends to concentrate around the minimizing configurations of $n$ points.
Technically, we propose to use the formalism of Gibbs measures from statistical physics, which comes with a large mathematical toolbox. 
Our motivation is to investigate to what extent these techniques can help analyze (a probabilistic relaxation of) minimizing \eqref{e:IPM}.


Gibbs measures are probability distributions that describe systems of interacting particles.
Choosing the interaction carefully, one can arrange the Gibbs measure to favor configurations of points that tend to minimize $I_{K}$ in \eqref{e:IPM}, when a suitable inverse temperature parameter goes to infinity. 
Gibbs measures have been studied for decades in probability and mathematical physics, with a focus on models that relate to electromagnetism \citep{serfaty2018systems}.
Classical results include large deviation principles (LDPs; \citealp{chafai2014first}) and concentration inequalities in some cases \citep{chafai2018concentration}.
Our main contribution is to prove that the Gibbs measure whose points \emph{repel} each other by an amount given by a \emph{bounded} kernel $K(x,y)$ satisfies a concentration inequality for the worst-case integration error~\eqref{e:IPM}; see Theorem~\ref{th:concentration_inequality}.
Our Corollary~\ref{c:sub_Gaussian_decay} shows a faster sub-Gaussian decay than the i.i.d or MCMC case, coming from the repulsion.
In other words, our probabilistic relaxation of herding provably outperforms classical Monte Carlo methods, in the sense that it comes with tighter confidence intervals for a fixed worst-case integration error~\eqref{e:IPM}.
We emphasize that the size of the confidence intervals we provide still decays as $n^{-1/2}$, like classical Monte Carlo, but the coverage is improved through one of the dependences in $n$ being replaced by the inverse temperature.

There are many limitations to be studied in future work, however. 
In particular, in the absence of a better sampling algorithm, we currently rely on MCMC to sample from our Gibbs measure; while we provide experimental evidence that the statistical properties of the target are well approximated, mixing time results tailored to our particular target would be important. 
Moreover, we assume that the target measure has compact support and rely on an approximation to evaluate the kernel embedding, as in most herding papers.
Here again, we follow usage and numerically assess the impact of that approximation step, but incorporating an estimator of the kernel embedding in the theory would be a plus, and it would demonstrate another interest of Gibbs measures in the analysis of kernel-based methods.

The rest of the paper is organized as follows. 
In Section~\ref{s:related_work}, we survey worst-case controls on the integration error.
In Section~\ref{s:results}, we introduce a family of Gibbs measures and state our theoretical results.
In Section~\ref{s:experiments}, we explain how to approximately sample from such Gibbs measures, and experimentally validate our claims.
We discuss perspectives in Section~\ref{s:discussion} and gather proofs in Section~\ref{s:proofs}.
Additional experiments and discussions mentioned in Section~\ref{s:experiments} are in the appendix.

\section{Related work}
\label{s:related_work}


\paragraph{Uniform concentration for Monte Carlo}
The introduction of a new Monte Carlo method is typically backed up by a central limit theorem \citep{robert2004monte}.
In practice, where the number $n$ of quadrature nodes is fixed, one prefers a concentration inequality, to derive a confidence interval for $\int f \d\pi$.
While rarely put forward, many applications further require a uniform control over several integrands. 
For instance, in multi-class classification with $0/1$ loss and $M$ classes, determining the Bayesian predictor involves giving a joint confidence region over $M-1$ integrals.
This motivates studying the simultaneous approximation of several integrals by a single set of $n$ Monte Carlo nodes.
One way to formalize this problem is by upper bounding the Wasserstein distance
\begin{equation}
    \label{e:wasserstein}
    W_{1}(\mu_n, \pi) = \underset{\lVert f\rVert_{\mathrm{Lip}} \leq 1}{\sup}\left\lvert \int f\,\mathrm{d}(\mu_n - \pi)\right\rvert,
\end{equation}
where $\mu_n$ is the Monte Carlo empirical approximation of the target measure $\pi$, 
and the supremum is taken over all $1$-Lipschitz functions. 
\citep{bolley2007quantitative, fournier2015rate} give a concentration inequality for \eqref{e:wasserstein} when $\mu_n$ is the empirical measure of i.i.d. draws from $\pi$.
\begin{theorem}[\citet{bolley2007quantitative}]
    \label{th:bolley}
    If $x_1, \dots, x_n \sim \pi$, under a suitable moment assumption on $\pi$, for any $d' > d$, there exists $n_0$ such that, for any $n \geq n_0$ and $r > n^{-1/(d' +2)}$,
    \begin{equation}
            \label{e:concentration_iid}
            \mathbb{P}\left[W_1(\mu_n, \pi) > r\right] \leq \exp\left(- \alpha n r^2\right),
    \end{equation}
    where $\alpha$ is a constant depending on $\pi$ and $\mu_n = n^{-1}\sum_{i = 1}^{n}\delta_{x_i}$.
    In particular, fix $\epsilon > 0$ and $\delta \in (0, 1)$.
    For any $n$ such that $n^{-1/(d'+2)} \leq \epsilon$ and $\exp\left(-\alpha n \epsilon^{2}\right) \leq \delta$,
    with probability higher than $1 - \delta$,
    \begin{equation}
            \label{e:quadrature_concentration_iid}
            \underset{\lVert f\rVert_{\mathrm{Lip}} \leq 1}{\sup}\left\lvert \int f\,\mathrm{d}(\mu_n - \pi)\right\rvert \leq \epsilon.
    \end{equation}
\end{theorem}
Two comments are in order. 
First, analogous bounds with the same rate exist for Markov chains \citep{fournier2015rate}.
Second, if we take the supremum over smoother functions, like the functions in the RKHS $\mathcal{H}_K$ \citep{BeTh11} of positive definite kernel $K,$
one can further hope to reach inequality \eqref{e:quadrature_concentration_iid}  for smaller values of $n$.

In fact, it is known \citep{dwivedi2024kernel, li2024debiaseddistributioncompression} that the worst-case integration error in $\mathcal{H}_K$ is $\mathcal{O}_{\mathbb{P}}(n^{-1/2})$ for geometrically ergodic Markov chains.
Moreover, \citep{tolstikhin2016minimax} have shown that this $n^{-1/2}$ is optimal for i.i.d points.

\paragraph{Variational Bayes and kernel herding}
Convergence guarantees for VB are often formulated in terms of the minimized dissimilarity measure; see \citep{alquier2016properties, lambert2022variational} and references therein. 
For instance, under strong assumptions on the target $\pi$ and the allowed variational approximation, \citep{lambert2022variational} give rates for the convergence to the minimal achievable relative entropy $\mathrm{KL}(\rho_t || \pi)$ between $\pi$ and the $t-$th iterate $\rho_t$ of an idealized (continuous-time) VB algorithm. 
For Stein variational gradient descent (SVGD), \citet{korba2020svgd} prove a decay of $\mathrm{KL}(\rho_t || \pi)$ along with non-asymptotic bounds at rate $t^{-1}$ for the kernel Stein discrepancy (KSD) between an idealized (continuous version) of the $t-$th iterate of the algorithm and the target $\pi$.
More recently, a noisy version of SVGD has been studied in \citep{priser2024longtimeasymptoticsnoisysvgd}, where the authors show the convergence this time for the actual $n-$point empirical measure ouput by these noisy SVGD updates when $n$ is fixed at first and the number of iterations $t$ goes to infinity.
Moreover, \citet{shi2024finite} proved a decay at rate $(\log\log n)^{-1/2}$ and depending on $t$ for the true $n-$point SVGD updates after $t$ iterations; it is to our knowledge the first rate of convergence result for the true SVGD algorithm.

Since the KSD is a particular case of \eqref{e:IPM}, this implies a control on a worst-case integration error.
Indeed, if $K$ is positive definite, so that it defines an RKHS $\mathcal{H}_{K}$, it can be shown that $I_K(\nu-\pi)$ in \eqref{e:IPM} is the square of the worst-case integration error over the unit ball of $\mathcal{H}_{K}$ when replacing $\pi$ by $\nu$; see e.g. \citep{gretton2010hilbert} and Proposition~\ref{p:dual}.
For KSD however, the RKHS corresponds to the Stein kernel, and is different from the original $\mathcal{H}_{K}$ in a way that is not yet fully understood \citep{benard2024kernel}.
It has nonetheless been shown \citep{gorham2017measuring, barp2022targeted, kanagawa2024controllingmomentskernelstein} that KSD or its variants control bounded-Lipschitz, Wasserstein or q-Wasserstein convergence, along with bounds on these distances in terms of the KSD. 
Yet to our knowledge the rate of convergence of KSD does not transfer to rates on the original $\mathcal{H}_K$.

Under some assumptions on $\mathcal{H}_K$, kernel herding algorithms have been proved to achieve $I_{K}(\mu_n - \pi) \leq c n^{-2}$, so that the worst-case quadrature error decreases at rate $n^{-1}$ \citep{chen2010super}. 
Other variants of conditional gradient algorithms led to further improvement, up to convergences of the type $I_{K}(\mu_n - \pi) \leq \exp\left(-c n\right)$  \citep{bach2012herding}.
While those assumptions are reasonable when the dimension of $\mathcal{H}_K$ is finite, \citep{bach2012herding} have shown that they are \emph{never} fulfilled in the infinite-dimensional setting.
In that case, the only general result is the ``slow" rate $I_{K}(\mu_n - \pi) \leq c n^{-1}$ \citep{bach2012herding}.
Variants of kernel herding have also led to the same slow rate, sometimes up to a logarithmic term \citep{chen2018steinpoints, chen2019steinmcmc, mak2018support}.

More recently, in a parallel line of work called \emph{distribution compression}, it has been shown \citep{dwivedi2024kernel,dwivedi2021generalized} that if one is allowed to draw $n^2$ pointbelts from a well-mixing Markov chain converging to $\pi$, it is possible to subsample the chain's history down to $n$ nodes, with worst-case integration error in the RKHS in $\mathcal{O}_{\mathbb{P}}(n^{-1} \log^{3d/4+1} n)$ for a Gaussian kernel, or $\mathcal{O}_{\mathbb{P}}(n^{-1} \log^{d+1}n)$ for inverse multi-quadratic kernels.
Moreover, the subsampling step can be done in time roughly $\mathcal{O}(n^2)$
\citep{shetty2022distributioncompressionnearlineartime}, and more arbitrary inputs can even be considered in the first stage \citep{li2024debiaseddistributioncompression}.
While the $n^{-1}$ in the rate is very appealing, the logarithmic power means that the bound is interesting only when $n$ is at least exponential in $d$, which is prohibitive when the integrand is expensive to evaluate.
A stronger rate has been achieved in \citep{dwivedi2021generalized}, but weakening the objective to integrating a \emph{single} integrand $f$ in a given RKHS $\mathcal{H}_K$.
In that case, the authors of \citep{dwivedi2021generalized} obtained a confidence interval of size $\mathcal{O}_{\mathbb{P}}(n^{-1}\sqrt{\log n})$, a quadratic improvement over classical Monte Carlo.

Our long-term goal is to match the cost and performance of these recent advances, while providing more practical confidence intervals, for instance with a central limit theorem with tractable asymptotic variance. 
To that end, we design a Gibbs measure that relaxes the objective of herding, and directly outputs an $n$-point supported measure to which we can apply a large toolbox of results developed in statistical physics. 
This paper is a first step in our program towards a central limit theorem for this Gibbs measure, and we focus here on concentration for the worst-case error.

\paragraph{Optimal kernel quadrature}
    In another line of research, optimal kernel quadrature has been shown to provide bounds for \eqref{e:IPM}.
    To cite a recent result,  the authors of \citep{BeBaCh20} give a bound for~\eqref{e:IPM} by studying a kernel-dependent joint distribution on the quadrature nodes called \emph{volume sampling}. 
In short, under generic assumptions on the kernel, Markov's inequality applied to \citep[Theorem 4]{BeBaCh20} shows that there is a constant $C>0$ such that under volume sampling, with probability $1-C\sigma_{n+1}/\epsilon$, 
\begin{equation}
    \label{e:ayoub_concentration}
    \sup_{f\in\mathcal{H}_K} \left\vert \int f\d\pi - \sum_{i=1}^n w_i f(x_i)\right\vert^2\leq \epsilon,
\end{equation}
where $\sigma_n$ is the $n$-th eigenvalue of the operator $f \mapsto \int K(., x)f(x)\,\mathrm{d}x$ on $L^2(\pi)$, and the weights $(w_i)$ are suitably chosen.
Because $\sigma_n$ can go to zero arbitrarily fast with $n$ (e.g., exponentially for the Gaussian kernel), \eqref{e:ayoub_concentration} attains a given confidence and error levels at smaller $n$ than under i.i.d. sampling. 
Downsides are that $(i)$ there is no exact algorithm yet for volume sampling that does not require computing the eigenvalues and eigenfunctions of the integral operator with kernel $K$, and $(ii)$ the dependence of $w_i$ on all nodes makes it hard to derive, e.g., a central limit theorem.

In the spirit of kernel thinning, \citet{chatalic2025efficient} also achieve optimal bounds e.g. in Sobolev RKHSs.
Yet, their procedure requires a large number of i.i.d inputs points, for instance exponential in $n$ for the Gaussian kernel, if the goal is to perform numerical integration with $n$ nodes.

\paragraph{Concentration for Gibbs measures}
We informally define a (Gibbs) measure on $\left(\mathbb{R}^{d}\right)^{n}$ by
\begin{equation}
    \label{e:gibbs_measure_and_energy}
    \frac{\d \PP}{\d X_n}(X_n) = \frac{\mathrm{e}^{-\beta_n H_n(X_n)}}{Z_{n, \beta_n}^{V}}, \quad H_n(X_n) = \frac{1}{2n^2}\sum_{i \neq j }K(x_i, x_j) + \frac{1}{n} \sum_{i = 1}^{n} V(x_i),
\end{equation} 
where $X_n$ is short for $(x_1, \dots, x_n)$, $\beta_n > 0$ is called \emph{inverse temperature}, $V:\mathbb{R}^d\rightarrow\mathbb{R}$ and $Z_{n, \beta_n}^{V}$ denotes the normalizing constant such that $\PP$ is a probability measure.
There are assumptions to be made on $K$ and $V$ to guarantee that \eqref{e:gibbs_measure_and_energy} defines a \emph{bona fide} probability distribution; see Section~\ref{s:results}.
$H_n$ in \eqref{e:gibbs_measure_and_energy} can be recognized to be a discrete analogous to $I_{K}$ in \eqref{e:IPM}. 
Intuitively, points distributed according to \eqref{e:gibbs_measure_and_energy} tend to correspond to low pairwise kernel values $K(x_i, x_j)$ (we say that they \emph{repel} by a force given by the kernel), yet stay confined in regions where $V$ is not too large. 
We also emphasize that the zero temperature limit -- informally taking $\beta_n = +\infty$ -- corresponds to finding the deterministic minimizers of $H_n$, which in turn intuitively correspond to minimizers of $I_K$; see \citep{serfaty2018systems} for precise results.
We focus in this paper on the so-called \emph{low temperature regime} $\beta_n / n \rightarrow + \infty$ (denoted in the sequel by $\beta_n \gg n$), 
in which one can hope to observe properties of the Gibbs measure \eqref{e:gibbs_measure_and_energy} that depart from those of i.i.d. sets of $n$ points.

Asymptotic properties of \eqref{e:gibbs_measure_and_energy} as $n\rightarrow\infty$ have been studied by \citep{chafai2014first} for a general $K$. 
As for non-asymptotic counterparts, concentration inequalities have been obtained for some singular kernels,\footnote{
    By \emph{singular}, we mean that $K(x,x)=+\infty$ for all $x\in\mathbb{R}^d$.
} known as the Coulomb and Riesz kernels \citep{chafai2018concentration,garcia2022generalized}.  
For instance, \citep{chafai2018concentration} prove for the Coulomb kernel that whenever $r > n^{-1/d}$,
\begin{equation}
    \label{e:concentration_coulomb}
    \PP(W_1(\mu_n, \mu_V) > r) \leq \exp\left(- c \beta_n r^2\right),
\end{equation}
where $\mu_V$ is the so-called \emph{equilibrium measure}, which depends in a non-trivial way on $V$ and $K$.
The concentration \eqref{e:concentration_coulomb} improves on the i.i.d. concentration in \eqref{e:concentration_iid}. 
Besides being valid for values of $r$ down to $n^{-1/d}$, the speed $n$ in the exponential is replaced by $\beta_n$, which can increase arbitrarily fast, at the price of replacing the target measure by the equilibrium measure of the system.\footnote{
    While a fast-growing $\beta_n$ implies better theoretical guarantees, the price of (approximately) sampling from $\PP$ intuitively increases with $\beta_n$, introducing a trade-off in practice; see Section~\ref{s:experiments}.
}
After choosing a suitable potential $V$ such that $\mu_V = \pi,$ we rephrase this bound as a uniform quadrature guarantee.
\begin{corollary}[\citet{chafai2018concentration}]\label{c:quadrature_coulomb}
Fix $\epsilon > 0$ and $\delta \in (0, 1)$.
Let $x_1, \dots, x_n \sim \mathbb{P}_{n, \beta_n}^{V}$ in \eqref{e:gibbs_measure_and_energy} and $\mu_n = n^{-1}\sum_{i = 1}^{n}\delta_{x_i}$.
For any $n$ such that $n^{-1/d} \leq \epsilon$ and $\exp\left(-c \beta_n \epsilon^{2}\right) \leq \delta$,
\begin{equation}
		\label{e:quadrature_concentration_coulomb}
		\mathbb{P}_{n, \beta_n}^{V} \left(\underset{\lVert f\rVert_{\mathrm{Lip}} \leq 1}{\sup}\left\lvert \int f\,\mathrm{d}(\mu_n - \pi)\right\rvert \leq \epsilon\right) \geq 1-\delta.
\end{equation}
\end{corollary}
As long as $\beta_n\gg n$, for a fixed confidence level $\delta$ and worst-case error $\epsilon$, as soon as $n$ is big enough so that $n^{-1/d} \leq \epsilon$, the constraints in Corollary~\ref{c:quadrature_coulomb} are achieved with a smaller $n$ than for i.i.d. samples in \eqref{e:quadrature_concentration_iid}.
Fewer quadrature nodes are required by the Gibbs measure to achieve the same guarantee.

Results like \citep{chafai2018concentration} are motivated by statistical physics and focus on a particular family of singular kernels. 
The price of singularity is quite long and technical proofs. 
On the other hand, in machine learning, we typically consider bounded kernels like the Gaussian or Matern kernel. 
Our main result is a version of \eqref{e:concentration_coulomb} that is valid for very general \emph{bounded} kernels, bringing an improvement over i.i.d. sampling similar to Corollary~\ref{c:quadrature_coulomb}, with $n^{-1/d}$ even replaced by $n^{-1/2}$.
Maybe surprisingly, while our proof follows the lines of \citep{chafai2018concentration}, we were able to considerably simplify the more technical arguments.
We hope our work helps transfer tools and concepts from the theory of Gibbs measures to the study of IPM-based quadrature. 

As a final note on existing work, and to prepare for the discussion in Section~\ref{s:discussion}, we remark that central limit theorems for Gibbs measures like \eqref{e:gibbs_measure_and_energy} (with speed depending on $\beta_n$) are very subtle mathematical results. \citep{leble2018fluctuations} and \citep{bauerschmidt2016two} have obtained a CLT only in dimension two so far, and yet only for the (singular) Coulomb kernel.
While important steps have been made towards larger dimensions for the Coulomb kernel \citep{serfaty2023gaussian}, this remains an important and difficult open problem in statistical physics. 
As a consequence, direct comparison with the $n^{-1/2}$ rate appearing in the CLTs of MCMC chains is currently out of reach.

\section{Main results}
\label{s:results}

We first introduce key notions to understand the limiting behavior of Gibbs measures, like the \emph{equilibrium measure}. 
We then introduce our Gibbs measure on quadrature nodes, and state our main result, which features the equilibrium measure. 
In the last paragraph, we explain how to choose the parameters of the Gibbs measure so that the equilibrium measure is a given target distribution $\pi$.

\subsection{Energies and the equilibrium measure}
\label{s:energies}
Let $d\geq 1$,
$    
    K\, : \mathbb{R}^{d} \times \mathbb{R}^{d} \rightarrow \mathbb{R}\cup \{\pm \infty\}
$ 
and 
$
    V \, : \mathbb{R}^{d}  \rightarrow \mathbb{R} \cup \{+\infty\}.
$
For reasons that shall become clear shortly, we call $K$ the \emph{interaction kernel}, and $V$ the \emph{external potential}.
Assumptions on $K$ and $V$ will be given to make the following definition meaningful.
We mention that the physics-inspired names for the different notions are useful to the intuition; see Section~\ref{app:physics_vocabulary}.

\begin{definition}[Energies]
    \label{d:energies}
    Whenever they are well-defined, we introduce the following quantities, for signed Borel measures $\mu, \nu$ on $\mathbb{R}^{d}$.
    The \emph{interaction potential}, or \emph{kernel embedding}, of $\mu$, is defined as 
    $
        U_{K}^{\mu}(z) = \int K(z, y)\,\mathrm{d}\mu(y)
    $, where $z\in \mathbb{R}^{d}$.
    The \emph{interaction energy} between $\mu$ and $\nu$ is defined as
    $
        I_K(\mu, \nu) = \iint K(x, y)\,\mathrm{d}\mu(x)\,\mathrm{d}\nu(y).
    $
    When $\mu = \nu$, we simply write $I_{K}(\mu)=I_K(\mu, \mu)$.
    Finally, we let
    $
        I_{K}^{V}(\mu) = \frac{1}{2}\iint \left\{K(x, y) + V(x) + V(y)\right\}\mathrm{d}\mu(x)\,\mathrm{d}\mu(y).
    $
\end{definition}

We will work under the following assumptions on $K$ and $V$.
The first one restricts our class of interaction kernels, insisting that points should repel, but that the interaction cannot be singular. 
\begin{assumption}
    \label{a:bounded_kernel}
    $K$ is symmetric, non-negative, continuous, and bounded on the diagonal: there exists some constant $C \geq 0$ such that $K(x, x) \leq C < \infty$ for all $x \in \mathbb{R}^{d}$.
\end{assumption}
Assumption~\ref{a:bounded_kernel} in particular ensures that $I_K(\mu)$ is well-defined for any probability measure, with possibly infinite value.
Our next assumption excludes pathological cases where $I_{K}$ does not induce a distance on probability distributions.
\begin{assumption}
    \label{a:isdp_kernel}
    $K$ is integrally strictly positive definite (ISDP), i.e. $I_{K}(\mu) > 0$ for any non-zero finite signed measure Borel $\mu$.
\end{assumption}
First, Assumptions~\ref{a:bounded_kernel} and \ref{a:isdp_kernel} allow most kernels used in machine learning \citep{RaWi06}, like the Gaussian or isotropic Matern kernels, as well as truncated singular kernels like the multiquadratic kernel
\citep{pronzato2020bayesian}.
Second, under Assumptions~\ref{a:bounded_kernel} and \ref{a:isdp_kernel}, $K$ is finite on the diagonal and ISDP, so that it is in particular positive definite.
We can then consider the RKHS $\mathcal{H}_{K}$ induced by the kernel $K$ \citep{BeTh11}.
An easy consequence of the Cauchy-Schwarz inequality in $\mathcal{H}_{K}$ is that for any $x, y \in \mathbb{R}^{d}$, $0 \leq K(x, y) \leq C$. 
In particular, $I_{K}(\mu, \nu)$ and $U_{K}^{\mu}(z)$ from Definition~\ref{d:energies} are well-defined and finite for all finite signed Borel measures; see \citep{pronzato2020bayesian} for more details.
We henceforth denote by $\mathcal{E}_{K}$ (respectively $\mathcal{E}_{K}^{V}$) the set of finite signed Borel measures with finite interaction energy $I_K(\mu)$ (respectively, with finite energy $I_{K}^{V}(\mu)$).
The following known duality formula then links energy minimization and quadrature guarantees for integrands in the unit ball of $\mathcal{H}_{K}$.

\begin{proposition}[\citet{gretton2010hilbert}]\label{p:dual}
Under Assumptions~\ref{a:bounded_kernel} and \ref{a:isdp_kernel}, 
for probabilities $\mu, \nu$ in $\mathcal{E}_{K}$, let
$
    \mathrm{MMD}_{K}(\mu, \nu) = \underset{\lVert f\rVert_{\mathcal{H}_{K}} \leq 1}{\sup}\left\lvert \int f\,\mathrm{d}(\mu - \nu)\right\rvert.
$
Then $\mathrm{MMD}_{K}(\mu, \nu) = \left(I_{K}(\mu - \nu)\right)^{1/2}$.
\end{proposition}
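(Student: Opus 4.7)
The plan is to exploit the Hilbert-space structure of $\mathcal{H}_K$ via the kernel mean embedding. The key intermediate object is the map
\[
    \Phi \colon \mu \longmapsto \int K(\cdot, y)\,\mathrm{d}\mu(y) \in \mathcal{H}_K,
\]
and the identification $\gamma_K(\mu,\nu) = \|\Phi(\mu)-\Phi(\nu)\|_{\mathcal{H}_K}$ will then reduce the proposition to a direct computation.

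First I would check that $\Phi$ is well-defined on finite signed Borel measures. Under \cref{a:bounded_kernel} and \cref{a:isdp_kernel}, the Cauchy–Schwarz inequality in $\mathcal{H}_K$ gives $|K(x,y)| \leq \sqrt{K(x,x)K(y,y)} \leq C$, so $y \mapsto K(\cdot,y)$ is a bounded $\mathcal{H}_K$-valued function. Bochner integration against any finite signed measure is then legitimate, so $\Phi(\mu) \in \mathcal{H}_K$ makes sense, and the reproducing property combined with a Fubini argument yields, for any $f \in \mathcal{H}_K$,
\[
    \langle f, \Phi(\mu)\rangle_{\mathcal{H}_K} = \int \langle f, K(\cdot,y)\rangle_{\mathcal{H}_K}\,\mathrm{d}\mu(y) = \int f(y)\,\mathrm{d}\mu(y).
\]

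Applying this to $\mu - \nu$ gives $\int f\,\mathrm{d}(\mu-\nu) = \langle f, \Phi(\mu)-\Phi(\nu)\rangle_{\mathcal{H}_K}$. Taking the supremum over the unit ball and using Cauchy–Schwarz in $\mathcal{H}_K$ (tight, with equality attained by $f = (\Phi(\mu)-\Phi(\nu))/\|\Phi(\mu)-\Phi(\nu)\|_{\mathcal{H}_K}$ whenever this is nonzero; otherwise both sides are zero), one gets
\[
    \gamma_K(\mu,\nu) = \|\Phi(\mu) - \Phi(\nu)\|_{\mathcal{H}_K}.
\]

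Finally I would compute the squared norm: again by the reproducing property and Bochner/Fubini,
\[
    \|\Phi(\mu-\nu)\|_{\mathcal{H}_K}^2 = \langle \Phi(\mu-\nu), \Phi(\mu-\nu)\rangle_{\mathcal{H}_K} = \iint K(x,y)\,\mathrm{d}(\mu-\nu)(x)\,\mathrm{d}(\mu-\nu)(y) = I_K(\mu-\nu),
\]
which is finite because $\mu,\nu \in \mathcal{E}_K$. Combining the two displays yields $\gamma_K(\mu,\nu) = I_K(\mu-\nu)^{1/2}$. The only point requiring genuine care is the Bochner-integrability / Fubini step that turns pointwise reproducing-kernel identities into integrated ones for signed measures; the boundedness of $K$ from \cref{a:bounded_kernel} makes this routine, which is why the singular-kernel analogues in the literature are much more delicate than the present bounded case.
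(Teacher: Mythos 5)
Your argument is correct and is precisely the standard kernel mean embedding proof from the cited reference \citep{gretton2010hilbert}; the paper itself does not reproduce a proof of this proposition, treating it as a known fact. One small point worth making fully explicit is that Bochner integrability of $y\mapsto K(\cdot,y)$ also requires strong measurability, which here follows from continuity of $K$ (\cref{a:bounded_kernel}): one has $\lVert K(\cdot,y)-K(\cdot,y')\rVert_{\mathcal{H}_K}^2 = K(y,y)-2K(y,y')+K(y',y')\to 0$ as $y'\to y$, so $y\mapsto K(\cdot,y)$ is norm-continuous hence strongly measurable, and the boundedness $\lVert K(\cdot,y)\rVert_{\mathcal{H}_K}=\sqrt{K(y,y)}\le\sqrt{C}$ then makes it Bochner integrable against any finite signed measure, exactly as you claim.
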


We add an assumption to make sure  that $V$ is strong enough a confining term. Together with Assumption~\ref{a:bounded_kernel}, this makes $I_{K}^{V}$ well-defined for any probability measure,\footnote{
    Unlike $I_K$, we shall only evaluate $I_K^V$ on probability measures.
}
with possibly infinite value. 
\begin{assumption}
    \label{a:confinement}
    $V$ is lower semi-continuous, finite everywhere and  $V(x) \longrightarrow + \infty$ when $\lvert x \rvert \longrightarrow +\infty.$ Moreover,
    there exists a constant $c > 0$ such that $\int \exp\left(-c V(x)\right)\,\mathrm{d}x < \infty$.
\end{assumption}

We are now ready to consider the minimizers of $I_{K}^{V}$.

\begin{proposition}\label{p:eq}
Let $K$ satisfy Assumptions~\ref{a:bounded_kernel} and \ref{a:isdp_kernel}, and $V$ satisfy Assumption~\ref{a:confinement}. 
Then $I_{K}^{V}$ is lower semi-continuous, has compact level sets and 
    $
         I_{K}^{V}(\mu) > - \infty
    $
    for any probability distribution $\mu$ on $\mathbb{R}^d$. Moreover, if $\mu \in \mathcal{E}_{K}^{V}$, then $I_{K}(\mu)$ and $\int \lvert V\rvert\,\mathrm{d}\mu$ are finite, and 
    $
        I_{K}^{V}(\mu) = \frac{1}{2}I_{K}(\mu) + \int V\,\mathrm{d}\mu
    $. $I_{K}^{V}$ is strictly convex on the convex non-empty set $\mathcal{E}_{K}^{V}$, $I_{K}^{V}$ has a unique minimizer $\mu_V$ over the set of probability measures on $\mathbb{R}^{d}$, called the \emph{equilibrium measure}, and the support of $\mu_V$ is compact.
    \end{proposition}
The proof borrows from \citep{chafai2014first} and \citep{pronzato2020bayesian}; see Section~\ref{s:proofs}.

\subsection{Concentration for the Gibbs measure}
\label{s:concentration_result}
We saw in Section~\ref{s:introduction} that herding-like algorithms rely on finding a configuration of points $\{x_1, \dots, x_n\}$ that minimizes the interaction energy $I_{K}(\frac{1}{n}\sum_{i = 1}^{n} \delta_{x_i} - \pi)$.
In this paper, we rather consider points drawn from a distribution that favors small values for an empirical proxy of this interaction energy.

\begin{definition}[Gibbs measure]
    Let $K$ satisfy Assumptions~\ref{a:bounded_kernel} and \ref{a:isdp_kernel}, and $V$ satisfy Assumption~\ref{a:confinement}. 
    Let $\beta_n \geq 2 c n$, where $c$ is the constant of Assumption~\ref{a:confinement}.
    The measure $\mathbb{P}_{n, \beta_n}^{V}$ defined on $\mathbb{R}^{dn}$ by \eqref{e:gibbs_measure_and_energy} is a \emph{bona fide} probability measure. 
    In particular, $Z_{n, \beta_n}^{V} \in (0,+\infty)$ \citep{chafai2014first, chafai2018concentration}.
\end{definition} 

When $x_1, \dots, x_n \sim \mathrm{d}\mathbb{P}_{n, \beta_n}^{V}$, we will henceforth denote by $\mu_n = \frac{1}{n}\sum_{i = 1}^{n} \delta_{x_i}$ the associated empirical measure.
We saw in Section~\ref{s:related_work} that $\mu_n$ converges to the equilibrium measure $\mu_V$, with a large deviations principle at speed $\beta_n$.
We are able to give non-asymptotic guarantees on this convergence via a concentration inequality, which is the main result of the paper.

\begin{theorem}[Concentration inequality]
    \label{th:concentration_inequality}
    Let $K$ satisfy Assumptions~\ref{a:bounded_kernel} and~\ref{a:isdp_kernel}, and $V$ satisfy Assumption~\ref{a:confinement}. 
    Further assume that the associated equilibrium measure $\mu_V$ has finite entropy
    \begin{align}
      S(\mu_V) = -\int \log\mu_V\,\mathrm{d}\mu_V. 
    \end{align}
    Let $\beta_n \geq 2 c n$, where $c$ is the constant of Assumption~\ref{a:confinement}.
    Then, for any $n \geq 2$ and for any $r > 0$,
    \begin{align}
        \mathbb{P}_{n, \beta_n}^{V} & \left(I_{K}\left(\mu_n - \mu_V\right) > r^2\right) 
        \leq \exp\left(-\frac{1}{4} \beta_n r^2 + n(c_1 + \frac{\beta_n}{n^2} c_2)\right),
        \label{e:main_result_c4}
    \end{align}
    with $c_1, c_2$ given by
    \begin{align}
      &c_1 = c I_{K}^{V}(\mu_V)+\log\int_{\mathbb{R}^{d}}\exp\left(-c V(x)\right)\,\mathrm{d}x - S(\mu_V),\\
      &c_2 = \frac{1}{2}C - \frac{1}{2}I_{K}(\mu_V),
    \end{align}
    and $C$ is the constant of Assumption~\ref{a:bounded_kernel}.
\end{theorem}

    We emphasize again that by Proposition~\ref{p:dual}, Equation \eqref{e:main_result_c4} provides a non-asymptotic confidence interval for the worst-case quadrature error in the unit ball of the RKHS $\mathcal{H}_{K}$.
    Note that the bound is only interesting in the regime $\beta_n /n \rightarrow +\infty$, where the temperature $1/\beta_n$ goes down quickly enough. 
    A classical choice of temperature scale is $\beta_n = \beta n^2$ where $\beta > 0$.
    We rephrase our main result to put forward the sub-Gaussian decay in the bound.

\begin{corollary}
    \label{c:sub_Gaussian_decay}
    Under the assumptions of Theorem~\ref{th:concentration_inequality}, let further $\beta_n \gg n$.
    Then, for any $n \geq 2$ and for any 
    \begin{align}
        \label{e:condition_r_subgauss}
        r \geq 4\max \left(\sqrt{c_2}n^{-1/2}, \sqrt{c_1}\left(\beta_n / n\right)^{-1/2}\right),
    \end{align}
    \begin{equation}
        \label{e:simplified_main_result_c4}
        \mathbb{P}_{n, \beta_n}^{V}\left(\mathrm{MMD}_K\left(\mu_n, \mu_V\right) > r\right) \leq \exp\left(-\frac{1}{8}\beta_n r^2\right).
    \end{equation}
\end{corollary}
The proof of Corollary~\ref{c:sub_Gaussian_decay} is straightforward from Theorem~\ref{th:concentration_inequality}, itself proved in Section~\ref{s:proofs}.

In Corollary~\ref{c:sub_Gaussian_decay}, when $\beta_n \geq v n^2$ for some constant $v > 0$, the condition on $r$ becomes $r > u_0 n^{-1/2}$.
    This makes it clear that our bound on the MMD still decreases as $n^{-1/2}$, like classical Monte Carlo, but that the exponent in the coverage \eqref{e:simplified_main_result_c4} has been improved from $\Omega(n^{-1})$ into $\Omega(\beta_{n}^{-1})$.
    Indeed, under i.i.d or MCMC sampling, results like~\eqref{e:concentration_iid} feature $n$ instead of $\beta_n$ in the right-hand side of~\eqref{e:simplified_main_result_c4}, and $\beta_n/ n\rightarrow +\infty$. 
    This means that for a fixed precision level $r > 0$ for the MMD, fewer points will be needed with a Gibbs measure to achieve the same coverage as i.i.d or MCMC points, as soon as the condition~\eqref{e:condition_r_subgauss} on $r$ still holds.
    This fast-increasing coverage probability of our confidence interval is a trace of the repulsion in the Gibbs measure.

    As another comment, integrating~\eqref{e:simplified_main_result_c4} over all $r > 0$ yields
    \begin{align}
    \mathbb{E}\left[\mathrm{MMD}_K\left(\mu_n, \mu_V\right)\right] \leq \frac{u_1}{\sqrt{n}} + \frac{u_2}{\sqrt{\beta_n}}
    \end{align}
    for some constants $u_1, u_2 > 0$, and we thus again recover the usual ``slow'' $n^{-1/2}$ rate for the MMD.
    Again, the improvement from our result compared to MCMC or i.i.d points rather comes at the level of coverage.

We now explain in concrete terms how Corollary~\ref{c:sub_Gaussian_decay} implies that Monte Carlo integration with our Gibbs measure and with respect to an arbitrary target distribution $\pi$ outperforms crude Monte Carlo.

\subsection{Application to guarantees for probabilistic herding}

Let $d \geq 1$ and $\pi$ be a probability measure on $\mathbb{R}^{d}$, which we assume to be our target. 

\begin{assumption}
    \label{a:compact_support}
    The support $S_\pi\subset \mathbb{R}^d$ of $\pi$ is compact, and $\pi$ has finite entropy, in the sense that $\pi$ has a density $\pi'$ w.r.t. Lebesgue, and that $ - \int \log \pi'(x) \mathrm{d}\pi(x) <\infty.$
\end{assumption}

The following proposition shows that, for a given kernel $K,$ we can choose $V$ so that $\mu_V = \pi$, assuming prior computation of the kernel embedding $U_{K}^{\pi}(z)$ for all $z \in \mathbb{R}^{d}$.

\begin{proposition}\label{p:inv}
    Let $K$ satisfy Assumptions~\ref{a:bounded_kernel} and \ref{a:isdp_kernel}, and $\pi$ satisfy Assumption~\ref{a:compact_support}.
    In particular, there exists $R>0$ such that $S_{\pi} \subset B(0, R)$, where $B(0, R)$ is the closed Euclidean ball. 
    Let $\Phi : \mathbb{R}^d \rightarrow \mathbb{R}_+$ be any continuous function such that
    $\Phi = 0$ on $\partial B(0, R),$ $\Phi(z) \rightarrow +\infty$ as $\lvert z \rvert \rightarrow +\infty$ and
    $
        \int_{\{\lvert x \rvert > R\}} e^{-\Phi(x)}\,\mathrm{d}x < \infty.
    $   
    Then, setting $V^{\pi}(z) = - U_{K}^{\pi}(z)$ when $z \in B(0, R)$ and $V^{\pi}(z) = -U_{K}^{\pi}(z) + \Phi(z)$ otherwise, $V^{\pi}$ satisfies Assumption~\ref{a:confinement} with $c = 1$ and $\mu_{V^{\pi}} = \pi$.
\end{proposition}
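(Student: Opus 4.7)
The plan is to check that $V^\pi$ meets \cref{a:confinement}, then show that $\pi$ is the unique minimizer of $I_K^{V^\pi}$ over probability measures on $\mathbb{R}^d$; the identification $\mu_{V^\pi}=\pi$ will then follow from the last item of \cref{p:eq}.

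For \cref{a:confinement}, I would first observe that Cauchy--Schwarz in $\mathcal{H}_K$ together with \cref{a:bounded_kernel} gives $0 \leq K \leq C$ on $\mathbb{R}^d \times \mathbb{R}^d$, hence $0 \leq U_K^\pi \leq C$; continuity of $U_K^\pi$ follows from continuity of $K$ and dominated convergence. Combined with continuity of $\Phi$ and the matching boundary value $\Phi|_{\partial B(0,R)}=0$, the function $V^\pi$ is continuous on all of $\mathbb{R}^d$, in particular lower semi-continuous and finite everywhere. Growth at infinity is immediate from $V^\pi(z) \geq -C + \Phi(z)$ for $|z|>R$. For the integrability condition, take $c=1$ and split the integral at $\partial B(0,R)$:
\[ \int e^{-V^\pi(x)}\,\d x \leq e^C\, \mathrm{vol}(B(0,R)) + e^C \int_{|x|>R} e^{-\Phi(x)}\,\d x < \infty, \]
where finiteness of the last term is the assumed integrability of $e^{-\Phi}$ outside the ball.

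To identify the minimizer, I would exploit the structural choice $V^\pi = -U_K^\pi$ on $S_\pi$ via a completion-of-squares argument. The nonnegativity of $\Phi$ gives the pointwise bound $V^\pi(z) \geq -U_K^\pi(z)$ on $\mathbb{R}^d$. For any probability measure $\mu$ with finite energy $I_K^{V^\pi}(\mu) < \infty$, item~2 of \cref{p:eq} yields $I_K^{V^\pi}(\mu) = \tfrac12 I_K(\mu) + \int V^\pi\,\d\mu$, so bilinearity of $I_K$ gives
\[ I_K^{V^\pi}(\mu) \geq \tfrac12 I_K(\mu) - I_K(\mu, \pi) = -\tfrac12 I_K(\pi) + \tfrac12 I_K(\mu - \pi). \]
Evaluated at $\mu=\pi$, both inequalities are equalities (since $S_\pi\subset B(0,R)$ forces $\int \Phi\,\d\pi = 0$ and $I_K(\pi-\pi)=0$), so $I_K^{V^\pi}(\pi) = -\tfrac12 I_K(\pi)$ and $\pi\in\mathcal{E}_K^{V^\pi}$. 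By \cref{a:isdp_kernel}, $I_K(\mu - \pi) \geq 0$ with equality if and only if $\mu = \pi$; therefore $\pi$ uniquely minimizes $I_K^{V^\pi}$, and $\mu_{V^\pi}=\pi$ by item~4 of \cref{p:eq}.

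The argument is essentially bookkeeping; the main step requiring care is the completion-of-squares identity, which is legitimate only because $K$ is bounded so that $I_K(\mu)$, $I_K(\mu,\pi)$ and $I_K(\pi)$ are all finite for every probability $\mu$. The real content is the design choice $V^\pi = -U_K^\pi$ on $S_\pi$: it makes the linear term $\int V^\pi\,\d\mu$ cancel exactly against a copy of the quadratic term at $\mu=\pi$, turning $\pi$ into the critical point of $I_K^V$; the confining term $\Phi$ is then a soft barrier ensuring that competitors whose support escapes $B(0,R)$ pay a strictly positive cost.
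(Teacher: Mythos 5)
Your proof is correct, but it closes the argument by a different route than the paper. Both proofs verify \cref{a:confinement} the same way (boundedness and continuity of $U_K^\pi$, with growth at infinity and integrability supplied by $\Phi$), and both rest on the same design fact: $U_K^\pi(z) + V^\pi(z) \geq 0$ everywhere, with equality on $B(0,R)\supset S_\pi$. The paper reads this as a verification of the Euler--Lagrange conditions and concludes via the abstract characterization of \cref{p:eul}; you instead feed it directly into a completion-of-squares bound, $I_K^{V^\pi}(\mu) \geq -\tfrac12 I_K(\pi) + \tfrac12 I_K(\mu-\pi)$, with equality at $\mu=\pi$, and hand uniqueness to \cref{a:isdp_kernel} together with item~4 of \cref{p:eq}. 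Your route is more self-contained in that it bypasses \cref{p:eul} entirely, effectively re-deriving the sufficiency of the Euler--Lagrange conditions for the specific pair $(\pi,V^\pi)$; the trade-off is that it leans on \cref{a:bounded_kernel} to make the cross-energy $I_K(\mu,\pi)$ finite so that the rearrangement $\tfrac12 I_K(\mu) - I_K(\mu,\pi) = \tfrac12 I_K(\mu-\pi) - \tfrac12 I_K(\pi)$ is harmless, whereas the Euler--Lagrange route is the one that survives for singular kernels. One small imprecision worth fixing: you write ``$\int \Phi\,\d\pi = 0$,'' but $\Phi$ enters $V^\pi$ only outside $B(0,R)$ and need not vanish on the interior of the ball; the quantity that actually vanishes is $\int_{\{\lvert x\rvert>R\}}\Phi\,\d\pi$, which is zero because $S_\pi\subset B(0,R)$. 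This does not affect the substance of the argument.
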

This is a standard result, which relies on the so-called Euler-Lagrange characterization of the equilibrium measure. 
We give a proof in Section~\ref{s:proofs}, which is inspired by Corollary $1.4$ of \citep{chafai2014first}, who treat the more difficult case of singular interactions.
A classical choice of $\Phi$ is $\Phi(z) = \lvert z\rvert^2 - R^2$.
In machine learning terms, Proposition~\ref{p:inv} says that a suitably penalized kernel embedding is a good choice of confining potential. 
Of course, this choice of $V$ requires the ability to evaluate the kernel embedding $U_{K}^{\pi}$, and we fall back here onto a standard limitation in the herding literature \citep{chen2010super, bach2012herding}.
While closed form expression for $U_{K}^{\pi}$ are available \citep[Table 1]{briol2019probabilistic} for simple pairs of kernel and target $(K, \pi)$, there is no general expression. We will rely on approximations of this quantity in the numerical experiments of Section~\ref{s:experiments}.

With $\pi$ now our equilibrium measure, Corollary~\ref{c:sub_Gaussian_decay} implies a uniform quadrature guarantee.
\begin{corollary}\label{c:quadrature_bounded_kernel}
    Let $K$ satisfy Assumptions~\ref{a:bounded_kernel} and \ref{a:isdp_kernel}, and $\pi$ satisfy Assumption~\ref{a:compact_support}.
    Set $V = V^{\pi}$ as in Proposition~\ref{p:inv}, assume that $\beta_n \gg n$, and let $x_1, \dots, x_n \sim \PP$.
    Let further $\epsilon > 0$ and $\delta \in (0, 1)$.
    For any $n$ such that       
    \begin{align}
          \epsilon \geq 4 \max\left(\sqrt{c_2}n^{-1/2}, \sqrt{c_1}\left(\beta_n / n\right)^{-1/2}\right)
      \end{align}
      and $\exp\left(-\beta_n \epsilon^{2}/8\right) \leq \delta$,
    with probability larger than $1 - \delta$,
\begin{equation}
		\label{e:quadrature_concentration_bounded_kernel}
		\underset{\lVert f\rVert_{\mathcal{H}_{K}} \leq 1}{\sup}\left\lvert \int f\,\mathrm{d}(\mu_n - \pi)\right\rvert \leq \epsilon.
\end{equation}
\end{corollary}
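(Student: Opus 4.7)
The plan is to chain three results already established in the paper: Proposition \ref{p:inv}, Corollary \ref{c:sub_Gaussian_decay}, and Proposition \ref{p:dual}. No new estimate is needed; the work is essentially verifying that the hypotheses align and converting between the quantities $I_K(\mu_n - \pi)$, $\gamma_K(\mu_n, \pi)$, and the variational supremum over the RKHS unit ball.

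First I would invoke Proposition \ref{p:inv} for the choice $V = V^\pi$. This guarantees that $V^\pi$ meets \cref{a:confinement} and, crucially, that the equilibrium measure is $\mu_{V^\pi} = \pi$. Combined with the finite-entropy assumption on $\pi$ from \cref{a:compact_support}, every hypothesis of \cref{th:concentration_inequality}, and hence of \cref{c:sub_Gaussian_decay}, is in force.

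Next I would apply \cref{c:sub_Gaussian_decay} with $r = \epsilon$. Under the regime $\beta_n \gg n$, the term $(\beta_n/n)^{-1/2}$ tends to $0$, so for $n$ large enough the maximum in condition \eqref{e:regime_for_simplified_bound} is $n^{-1/2}$, and the hypothesis $n^{-1/2} \leq \epsilon$ (together with absorbing the constant $u_0$ into the ``$n$ big enough'' quantification) ensures admissibility. The conclusion \eqref{e:simplified_main_result} then reads
\begin{equation*}
\mathbb{P}_{n,\beta_n}^V\bigl(I_K(\mu_n - \pi) > \epsilon^2\bigr) \leq \exp(-u_1 \beta_n \epsilon^2) \leq \delta,
\end{equation*}
where the last inequality uses the second largeness condition on $n$, with the constant $c$ in the statement of the corollary identified to $u_1$.

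Finally, on the complementary event of probability at least $1-\delta$, \cref{p:dual} gives
\begin{equation*}
\sup_{\lVert f \rVert_{\mathcal{H}_K} \leq 1}\Bigl|\int f \, \mathrm{d}(\mu_n - \pi)\Bigr| = \gamma_K(\mu_n, \pi) = \sqrt{I_K(\mu_n - \pi)} \leq \epsilon,
\end{equation*}
which is the desired bound \eqref{e:quadrature_concentration_bounded_kernel}. There is no real obstacle here; the only subtle point is bookkeeping the regime restriction in \cref{c:sub_Gaussian_decay} and checking that $\beta_n \gg n$ makes the $(\beta_n/n)^{-1/2}$ term disappear asymptotically, so that only $n^{-1/2} \leq \epsilon$ need be retained in the statement.
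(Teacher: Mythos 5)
Your proof is correct and follows exactly the paper's intended route: the authors explicitly describe the proof as ``a direct application of \cref{c:sub_Gaussian_decay} and \cref{p:inv},'' which is precisely the chain you assemble (with \cref{p:dual} supplying the routine translation between $I_K(\mu_n - \pi)$ and the RKHS worst-case error). Your handling of the regime restriction --- noting that $\beta_n \gg n$ eventually makes $n^{-1/2}$ the binding term in \eqref{e:regime_for_simplified_bound} and that $u_0$ is absorbed into the ``$n$ big enough'' quantifier --- is the right bookkeeping and matches the paper's informal statement of the corollary.
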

Corollary~\ref{c:quadrature_bounded_kernel} follows from Corollary~\ref{c:sub_Gaussian_decay} and Proposition~\ref{p:inv}.
Compared to Theorem~\ref{th:bolley}, for a fixed worst-case integration error $\epsilon$ and confidence level $\delta$, fewer points are required under $\PP$ than under i.i.d samples from $\pi$.

Assumptions on the RKHS are generic, and one can see that our concentration result does not depend on the kernel $K$ or the target $\pi$ except for the constants $c_1$ and $c_2$.
This is to be expected from the statistical physics literature, since Gibbs measures typically exhibit \emph{universality}: the fluctuations or local properties of the point process are expected to be independent of its global parameters.
Another strength of our result is that the constants and speeds are quite explicit.
We could be worried that the constants $c_1$ and $c_2$ hide exponential dependencies in the dimension: we show in the following proposition that this is not the case.

\begin{proposition}[Behavior of the constants]
  \label{l:bounds_constants_concentration}
  We work under the assumptions of Corollary~\ref{c:quadrature_bounded_kernel}.
  Recall that $C$ is a positive constant such that $0 \leq K(x, x) \leq C$ for all $x \in \mathbb{R}^{d}$, and that $R$ is a positive constant such that the support of $\pi$ is included in $B(0, R)$.
  We further choose $\Phi(x) = \lVert x\rVert^2 - R^2$ in Proposition~\ref{p:inv}.
  Then
  \begin{align}
    &c_1 \leq 2C-S(\pi) + d\left(\frac{\sqrt{2}+1}{2}\log(\pi)+\log R\right) + R^2 -\log \Gamma\left(d/2 +1\right),\\
    &c_2 \leq \frac{1}{2}C,
  \end{align}
  where $\Gamma$ is Euler's Gamma function.
  In particular,
  \begin{align}
    \log \Gamma(d/2+1) = \mathcal{O}\left(d\log d\right)
  \end{align}
  as $d\rightarrow + \infty$, using Stirling's formula.
  Moreover,
  \begin{align}
    -S(\pi) := \int \log \pi'(x) \,\mathrm{d}\pi(x) \leq \lVert \log \pi'\rVert_{\infty},
  \end{align}
  as soon as the density $\pi'$ of $\pi$ is continuous on its compact support $S_{\pi}$.
\end{proposition}
The proof is given in Section~\ref{s:proofs}.
We thus see from Proposition~\ref{l:bounds_constants_concentration} that there is no intrinsic bad dependence of the constants in the dimension $d$.
Of course, one could choose a kernel $K$ that is very close to a singular kernel, so that the constant $C$ blows up exponentially with $d$, but this issue would transfer to all kernel-based methods for such a $K$.
The same goes with choosing $\pi$ such that $R$ scales badly with the dimension $d$.
Moreover, the bounds in Proposition~\ref{l:bounds_constants_concentration} are far from sharp, and we completely discarded the negative energy terms in~\eqref{e:constants_concentration}.

Another high-level remark is that the concentration bounds in Corollary~\ref{c:quadrature_bounded_kernel} suggest that considering $\beta_n$ as large as possible would provide the best guarantee in terms of coverage.
This is to be tempered with the fact that sampling points from the Gibbs measure $\mathbb{P}_{n, \beta_n}^{V^{\pi}}$ in \eqref{e:gibbs_measure_and_energy} is intuitively harder when $\beta_n$ is very large.

    Relatedly, the elephant in the room is that there is no general algorithm to draw points $x_1, \dots, x_n$ from $\mathbb{P}_{n, \beta_n}^{V^{\pi}}$, and we rely on MCMC to do so in this paper, see Section~\ref{s:experiments}.
    When $\beta_n$ is very large, the Gibbs distribution $\mathbb{P}_{n, \beta_n}^{V^{\pi}}$ becomes very spiked around the minimizers of the energy~\eqref{e:gibbs_measure_and_energy}: this is typically a difficult setting for MCMC algorithms, meaning that many iterations might be needed to get a good approximation in total variation distance to the target.
    Quantifying how this approximated sampling interplays with the statistical guarantees of Corollary~\ref{c:quadrature_bounded_kernel} is an avenue for future work.
Meanwhile, the following corollary gives a general, yet unpractical, result for the statistical guarantees obtained when combining our statistical guarantees with an approximate MCMC sampler.
\begin{corollary}
  \label{cor:mixing_time_gibbs}
  We work under the assumptions of Corollary~\ref{c:quadrature_bounded_kernel}.
  Consider a Markov kernel $P_n$ on $\mathbb{R}^{dn}$ and an initial distribution $\xi_n$, and define the mixing time
  \begin{align}
      \label{e:mixing_time_tv_gibbs}
      \mathcal{T}_{\mathrm{TV}, n, \beta_n}(\xi_n, \delta) = \inf\{ T \geq 1\,:\, \lVert \xi_n P_{n}^{T} - \mathbb{P}_{n, \beta_n}^{V^{\pi}}\rVert_{\mathrm{TV}} < \delta\}.
  \end{align}
  We denote by $x_{1}^{(T)}, \dots, x_{n}^{(T)}$ the state of the Markov chain with kernel $P_n$ started at $\xi_n$ after $T$ iterations, and by $\mu_{n}^{(T)}$ the associated empirical measure.
  Let $\epsilon > 0$ and $\delta \in (0, 1)$.
  Let $n$ be such that 
      \begin{align}
          \epsilon \geq 4 \max\left(\sqrt{c_2}n^{-1/2}, \sqrt{c_1}\left(\beta_n / n\right)^{-1/2}\right)
      \end{align}
  and 
  \begin{align}
  2\exp\left(-\beta_n \epsilon^{2}/8\right) \leq \delta.
  \end{align}
  Further assume that
  $
    T \geq \mathcal{T}_{\mathrm{TV}, n, \beta_n}(\xi_n, \delta/2).
  $
  Then, with probability larger than $1-\delta$,
  \begin{equation}
		\label{e:quadrature_concentration_bounded_kernel_mixing}
		\underset{\lVert f\rVert_{\mathcal{H}_{K}} \leq 1}{\sup}\left\lvert \int f\,\mathrm{d}\left(\mu_{n}^{(T)} - \pi\right)\right\rvert \leq \epsilon.
\end{equation}
\end{corollary}
The proof just comes from the inequality
\begin{align}
  \mathbb{P}\left(\mathrm{MMD}_{K}\left(\mu_{n}^{(T)}, \pi\right) > \epsilon\right) \leq \mathbb{P}_{n, \beta_n}^{V^{\pi}}\left(\mathrm{MMD}_{K}\left(\mu_{n}, \pi\right) > \epsilon\right)+ \lVert P_{n}^{T} \xi - \mathbb{P}_{n, \beta_n}^{V^{\pi}}\rVert_{\mathrm{TV}},
\end{align}
where $\mathrm{TV}$ denotes the total variation distance,
and we applied Corollary~\ref{c:quadrature_bounded_kernel} with $\delta/2$.

    Corollary~\ref{cor:mixing_time_gibbs} gives tighter coverage than with $n$ i.i.d points for the algorithm we actually use in the experimental Section~\ref{s:experiments}, provided the mixing time $\mathcal{T}_{\mathrm{TV}, n, \beta_n}(\xi_n, \delta)$ is finite and scales reasonably with the dimensions $d, n$ of the problem.
    Yet, to our knowledge, there are no usable bounds for such a mixing time in the MCMC literature, in particular because the target distribution is not log-concave in general.
    This is the main limitation of our approach, which we will discuss in Section~\ref{s:discussion}.

    For now, we will use standard Markov kernels $P_n$ in Section~\ref{s:experiments}, along with some heuristics to take into account the dependence in $\beta_n$.
    The fact that we will observe tighter confidence intervals than for classical Monte Carlo, as suggested by our concentration bound, is a token that approximate sampling from our Gibbs measure is a reasonable goal.
    We now turn to a practical description of the sampling algorithm, discuss the main bottlenecks, and illustrate our results on toy examples.


\section{Experiments}
\label{s:experiments}


\subsection{Approximately sampling from $\PP$}
\label{s:sampling}
\paragraph{Markov kernel}
There is no known algorithm to sample exactly from \eqref{e:gibbs_measure_and_energy} for generic $K$ and $V$, so we resort to MCMC, namely the Metropolis-adjusted Langevin algorithm (MALA; \citealp{robert2004monte}). 
To wit, MALA is Metropolis--Hastings (MH) with proposal 
\begin{equation}
    \label{e:mala_proposal}
    y | y_{t} \sim \mathcal{N}\left(y_{t} - \alpha \beta_n \nabla H_n(y_{t}), 2 \alpha I_{d n}\right),
\end{equation}
where $y \in(\mathbb{R}^d)^n$, and $\alpha$ is a user-tuned step size parameter.
To sample from $\PP$, we run $T$ MALA iterations and keep \emph{only} the $T-$th iteration as our approximate sample.
An approximate sample from our Gibbs measure thus costs $\mathcal{O}(Tn^2)$, since we need to compute pairwise interactions at each MALA iteration.
We refer to Section~\ref{a:comput:complexity} for more on the computational complexity and its comparison to standard MCMC quadrature and herding algorithms.
At this stage, we simply recall that in the herding setting of expensive-to-evaluate integrands, a larger compute time is justified as long as it comes with a more efficient use of the $n$ integrand evaluations.
We now discuss how we choose the step-size $\alpha$ and give heuristics on mixing times for $\mathbb{P}_{n, \beta_n}^{V^{\pi}}$.

\paragraph{Scaling the step-size}
We choose the MALA step size as $\alpha = \alpha_0\beta_{n}^{-1}$, and $\alpha_0$ is manually tuned at the beginning of each run so that acceptance reaches $50\%$. 
Having $\alpha$ decrease at least as $\beta_n^{-1}$ intuitively avoids the distance between two consecutive MALA states to grow with $n$; see the MALA proposal \eqref{e:mala_proposal}.
This choice is motivated by the fact that we do not wish for the number of iterations $T$ in Corollary~\ref{cor:mixing_time_gibbs} to scale with $\beta_n$ to reach a good approximation of $\PP$ and thus $\pi$.

In fact, consider a target measure $\mu = e^{-h(x)}\,\mathrm{d}x$ on $\mathbb{R}^{d'}$ which is $L$-smooth and $m$-strongly log-concave in the sense that $h$ is $\mathcal{C}^2$ and
\begin{align}
  \label{e:strongly_log_concave_assumption}
  m I_{d'} \prec \nabla^2 h \prec L I_{d'}
\end{align}
with $L > m > 0$.
Recall that for any initial distribution $\xi$ and any $\delta > 0$, we denote by
\begin{align}
  \mathcal{T}_{\mathrm{TV}}(\xi, \delta) = \inf\{T \geq 1: \lVert \xi P^T  - \pi\rVert_{\mathrm{TV}} < \delta\}
\end{align}
the mixing time of the Markov chain with Markov kernel $P$, in total variation.
In particular, we write $\mathcal{T}_{\mathrm{TV}}^{\mathrm{MALA}}(\xi, \delta)$ the mixing time for MALA.

Assuming that the initial distribution $\xi$ of the Markov chain is already close to $\mu$ --- this is called a \emph{warm start} --- \citep[Theorem 1]{dwivedi2019mixing} show that
\begin{align}
  \label{e:mixing_time_mala}
  \mathcal{T}_{\mathrm{TV}}^{\mathrm{MALA}}(\xi, \delta) = \mathcal{O}\left(\max\{d' \kappa, \sqrt{d'}\kappa^{3/2}\}\log\left(\frac{1}{\delta}\right)\right),
\end{align}
where $\kappa = L/m$, as soon as
\begin{align}
  \label{e:scaling_step_size_mixing_time}
  \alpha = \min\left\{ \frac{\sqrt{m}}{c(\delta, d') L \sqrt{d' L}}, \frac{1}{L d'}\right\},
\end{align}
where $c(\delta, d')$ is a function of $\delta$ and $d'$.
The polynomial dependence in the dimension $d'$ support the claim that MCMC algorithms are able scale to moderate to high dimensions with a good initialization and in the convex setting.

In our case, the target is $\mathbb{P}_{n, \beta_n}^{V^{\pi}}$.
We thus have $d' = d n$ and $m$ and $L$ would depend linearly on $\beta_n$.
If the energy $H_n$ in~\eqref{e:gibbs_measure_and_energy} was to satisfy~\eqref{e:strongly_log_concave_assumption} with $m_n$ and $L_n$, then so would the log-density of $\mathbb{P}_{n, \beta_n}^{V^{\pi}}$ with $\tilde{m}_n = \beta_n m_n$ and $\tilde{L}_n = \beta_n L_n$.
Thus, $\kappa = \tilde{L}_n/\tilde{m}_n = L_n / m_n$ would be independent of the choice of $\beta_n$ and so would the mixing time~\eqref{e:mixing_time_mala}.

From~\eqref{e:scaling_step_size_mixing_time}, this requires to consider a step-size 
\begin{align}
  \label{e:scaling_step_size_gibbs}
  \alpha = \beta_{n}^{-1} \min\left\{ \frac{\sqrt{m_n}}{c(\delta, d n) L_n \sqrt{d n L_n}}, \frac{1}{n L_n d}\right\},
\end{align}
thus depending on $\beta_n$ like $\beta_{n}^{-1}$.

This is only heuristic for numerous reasons.
First, $\mathbb{P}_{n, \beta_n}^{V^{\pi}}$ is not at all log-concave in general, let alone strongly log-concave.
Even if it was only weakly log-concave in the sense that~\eqref{e:strongly_log_concave_assumption} is satisfied with $m = 0$, which is still an unreasonable assumption for $\mathbb{P}_{n, \beta_n}^{V^{\pi}}$, then \citet{dwivedi2019mixing} show that the mixing time is $\mathcal{O}(d'^{2} L^{3/2}/\delta^{3/2})$.
Thus, there would this time be a dependence on $\beta_n$ coming from $L = \tilde{L}_n = \beta_n L_n$.
In Section~\ref{s:discussion}, we will discuss recent works trying to depart from the log-concave setting, yet the mentioned results are still far from ready-to-use in our case.

More importantly, with a small step-size $\alpha$ or large $\beta_n$, many MALA iterations might be necessary to reach a warm start, i.e a region of large mass under $\mathbb{P}_{n, \beta_n}^{V^{\pi}}$ (or any other target measure).
In practice, what we could recommend is to first run a fast-scaling deterministic method such as SVGD \citep{liu2016stein} or KSD descent \citep{korba2021kernel}, which are in a sense the warmest starts available in the literature, and use their result to initialize our algorithm, in order to benefit from our concentration guarantees.
In the end, these mixing properties are more a feature from MALA than from our Gibbs measure, and there likely are dedicated sampling algorithms that will outperform MALA at approximating $\PP$.

Nonetheless, there is thus a trade-off between large $\beta_n$ (i.e. concentrated $\mathbb{P}_{n, \beta_n}^{V^{\pi}}$ and small confidence regions) and sampling cost.
We advocate the use of $\beta_n \propto n^2$, which we could satisfyingly implement and which already provably outperforms vanilla MCMC in our theoretical result and in the toy experiments of this chapter.
In practice, we have been able to experiment up to $\beta_n \propto n^5$ but not higher, as numerical errors then pop up.

\paragraph{Approximation of the kernel embedding}
As usual in kernel herding, the kernel embedding $U_{K}^{\pi}$ is approximated by averaging along an MCMC chain of length $M\gg 1$, so that the computational cost of a sample from our Gibbs measure becomes $\mathcal{O}(Tn^2 + TMn)$; see Section~\ref{a:comput:kernel_approx} for more comments on this.

\paragraph{Toy examples}

\begin{figure}[!ht]
    \centering
    \subfloat[$\beta_n=n^{3/2}$ \label{f:three_samples:power_three_half}]{%
        \includegraphics[width=\threefig]{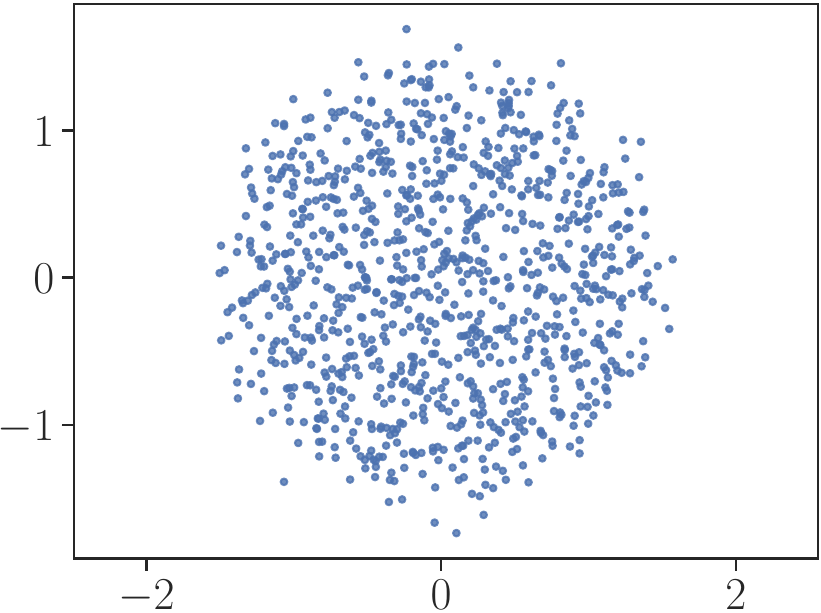}
    }
    \subfloat[$\beta_n = n^2$ \label{f:three_samples:power_two}]{
        \includegraphics[width=\threefig]{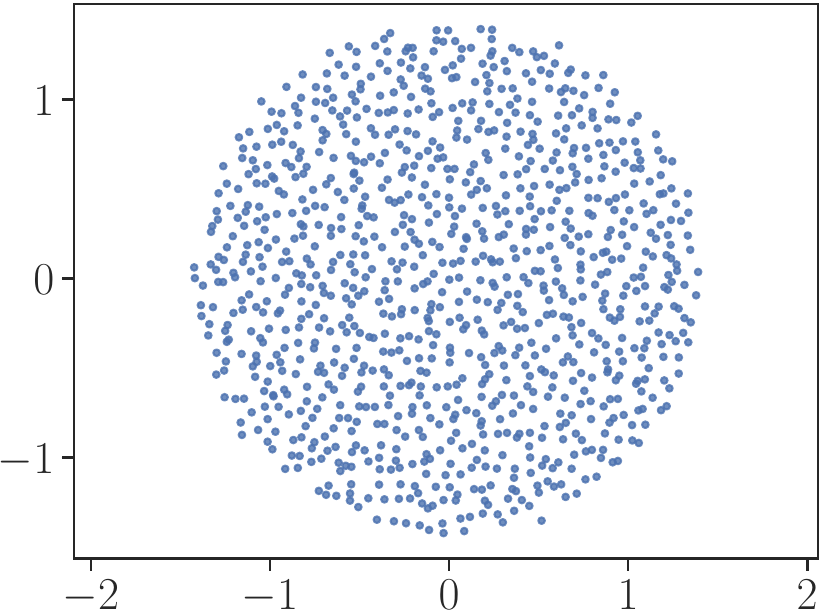}
    }
    \subfloat[$\beta_n = n^3$ \label{f:three_samples:power_three}]{
        \includegraphics[width=\threefig]{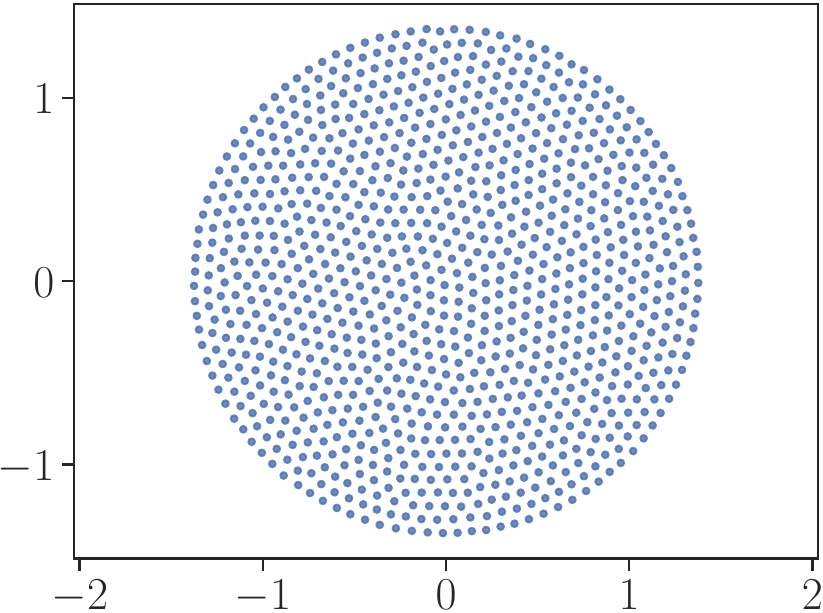}
    }
    \caption{Three independent approximate samples of $\PP$, with different temperature schedules.}
    \label{f:three_samples}
\end{figure}

We show in Figure~\ref{f:three_samples} how points sampled from $\PP$ look like in dimension $d=2$, for the quadratic potential $V:x\mapsto \lvert x\rvert^{2}/2$, and for different scalings of the inverse temperature $\beta_n$.
We consider the truncated logarithmic kernel $K(x, y) = - \log\left(\lvert x - y\rvert^{2} + \epsilon^{2}\right)$, where $\epsilon$ is a truncation parameter that we set to $\epsilon = 10^{-2}$, and set the number of particles $n$ to $1000$.
For $\epsilon=0$, the equilibrium measure is known to be uniform on the unit disk, and we expect it to be close to uniform in the truncated case as well.
In each panel of Figure~\ref{f:three_samples}, we show the state of the MALA chain after $T = 5000$ iterations, using Python and Jax \citep{jax2018github}.

We observe that the three empirical measures indeed approximate the uniform distribution on the disk, with more regular spacings as the inverse temperature grows.
Figure~\ref{f:three_samples:power_two} already shows a more regular arrangement of the points than under i.i.d. draws from the uniform distribution, while the lattice-like structure of Figure~\ref{f:three_samples:power_three} is a manifestation of what physicists call \emph{crystallization} \citep{serfaty2023gaussian}: the Gibbs measure is concentrated around minimizers of the energy.

Like most MCMC samplers, the performance of MALA is influenced by the absence of a warm start and the multimodality of the target. 
In Section~\ref{a:comput:multimodal}, we run an experiment targeting a mixture of 2D Gaussians, and illustrate how classical MCMC techniques like annealing tackle these issues.

In the next sections, we focus on illustrating the guarantees obtained in Section~\ref{s:results} on toy examples.
\subsection{Coverage of Monte Carlo integration}
\label{sub:coverage_c4}
    We first illustrate our main result from Corollary~\ref{cor:mixing_time_gibbs} stating that at a fixed precision level $\epsilon > 0$, Gibbs measures provide tighter confidence regions than vanilla Metropolis--Hastings targeting $\pi$ with the same number of points.
    \begin{figure}
        \centering
        \subfloat[$\mathbb{P}_{n, \beta_n}^{V^{\pi}}$ versus MCMC\label{f:coverage_gibbs_vs_mcmc_c4}]{%
            \includegraphics[width=\twofig]{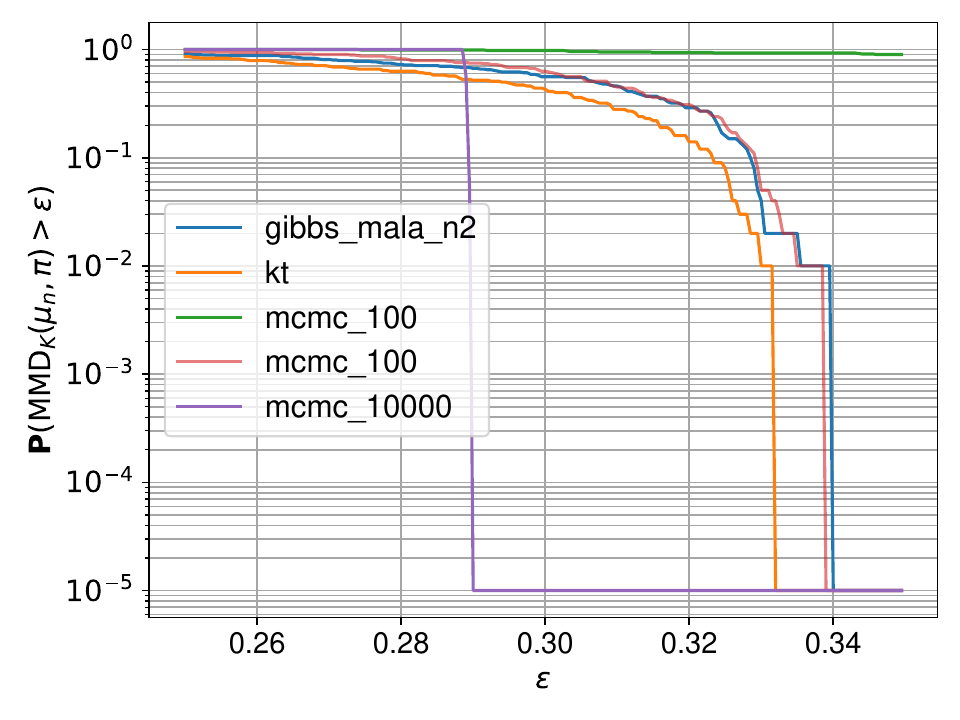}
        }
        \subfloat[MH and MALA kernels for sampling $\mathbb{P}_{n, \beta_n}^{V^{\pi}}$\label{f:coverage_mh_vs_mala_c4}]{
            \includegraphics[width=\twofig]{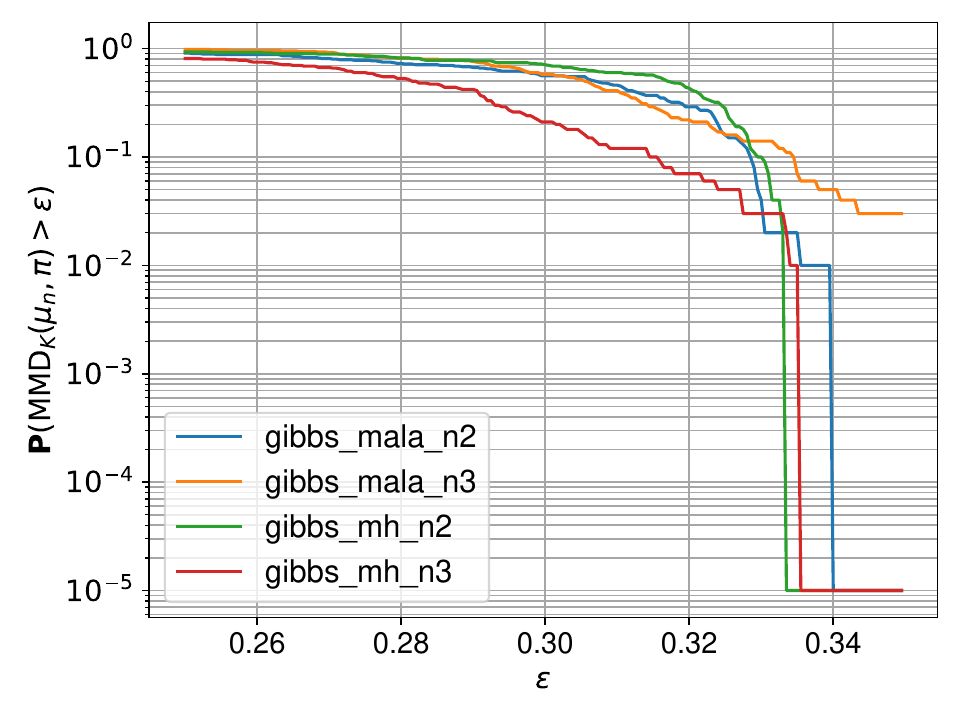}
        }
        \caption{Coverage comparisons between $\mathbb{P}_{n, \beta_n}^{V^{\pi}}$, MCMC and kernel thinning (\ref{f:coverage_gibbs_vs_mcmc_c4}) and for different sampling schemes for $\mathbb{P}_{n, \beta_n}^{V^{\pi}}$ (\ref{f:coverage_mh_vs_mala_c4}).}
        \label{f:figs_coverage_c4}
    \end{figure}
    We consider $\pi$ to be a Gaussian distribution in dimension $d = 10$, with variance $\sigma^2 = 1$, which we multiply by the indicator of $B(0, 2\sigma)$ to make sure $\pi$ has compact support.
    We further consider a Gaussian kernel $K(x, y) = \exp(-\lvert x - y\rvert^2 /2\sigma^2)$ with $\sigma = 1$.
    We want to empirically evaluate the coverage obtained by each integration method.
    To do so, for each integration method, we obtain $100$ independent samples $(x_1, \dots, x_n)$.
    We then compute the proportion of samples such that
    \begin{align}
    \mathrm{MMD_K}(\mu_n, \pi) > \epsilon,
    \end{align}
    for a grid of values of $\epsilon > 0$, where $\mu_n$ is the empirical measure $\mu_n = \frac{1}{n}\sum_{i = 1}^{n}\delta_{x_i}$.
    The MMD is approximated by replacing the integrals with respect to $\pi$ in
    \begin{align}
        \label{e:mmd_decomposition}
        I_{K}(\mu - \pi) = I_{K}(\mu) - 2 I_{K}(\mu, \pi) + I_{K}(\pi),
    \end{align}
    by averages over MCMC chains targeting $\pi$ with $T = 10^6$ iterations and independent of all the rest.
    This gives an estimate of the coverage
    \begin{align}
    \varepsilon\mapsto \mathbb{P}\left(\mathrm{MMD_K}(\mu_n, \pi) > \epsilon\right)
    \end{align}
    for each quadrature rule. 

    We compare the following quadratures, all initialized with the same distribution and run independently of one another.
    In Figure~\ref{f:coverage_gibbs_vs_mcmc_c4}, we consider the history of an MH chain of length $n$ targeting $\pi$, with an isotropic Gaussian proposal tuned to reach $50\%$ acceptance, for $n \in \{100, 1000, 10000\}$.
    We then consider $n = 100$ points $x_1, \dots, x_n$ obtained via the kernel thinning algorithm of \citet{dwivedi2021generalized} with $n^2$ input points drawn again from an MH chain targeting $\pi$.
    Finally, we consider approximate samples from $\mathbb{P}_{n, \beta_n}^{V^{\pi}}$ after $T = 10\,000$ MALA updates described in Section~\ref{s:sampling} with $n = 100$ and $\beta_n = n^2$, with $M = 1\,000$ points drawn from an MH chain targeting $\pi$ to approximately compute the kernel embedding $U_{K}^{\pi}$.

    We see that $n = 100$ points approximately drawn from our Gibbs measure with temperature $\beta_n = n^2$ compare to $n = 100$ points obtained from kernel thinning in terms of coverage, and that both largely outperform crude MCMC with the same number of points.
    We would also expect from Section~\ref{s:related_work} and Corollary~\ref{cor:mixing_time_gibbs} that both kernel thinning and the Gibbs measure with $n=100$ points obtain similar coverages as the history of vanilla MCMC with $n^2 = 10\,000$ iterations.
    We see in Figure~\ref{f:coverage_gibbs_vs_mcmc_c4} that they rather match the performance of MCMC with $1\,000$ iterations. 

    The reasons can be many.
    First, one can expect Metropolis--Hastings to mix fast for such a simple choice of target $\pi$.
    Second, since we work at fixed $n$ and dimension $d = 10$, the constants in~\eqref{e:quadrature_concentration_bounded_kernel_mixing} and for kernel thinning might play an important role in the observed results.
    Finally, for the Gibbs measure, we only used a small number $T = 10\,000$ of MALA iterations, and a small number $M = 1\,000$ of MCMC input points.

    We particularly highlight the fact that we are able to match the results of kernel thinning in terms of coverage, for $\beta_n = n^2$ and with $10$ times fewer input points, since we only chose $M = 1\,000$.
    These results have been obtained using CPUs with 2 GB of RAM.
    Each run of kernel thinning with $n = 100$ points took $15$ seconds in average, \rev{using the \emph{kt.thin} function of the package \emph{goodpoints}\footnote{\href{https://github.com/microsoft/goodpoints}{https://github.com/microsoft/goodpoints} } with option \emph{store\_K} = True}.
    On the other hand, our MALA iterations to approximately sample from our Gibbs measure took around $65$ seconds per run.
    Further in favor of kernel thinning, we note that computationally cheaper --\rev{though maybe more involved to describe-- versions of kernel thinning have been recently designed \citep{li2024debiaseddistributioncompression,shetty2022distributioncompressionnearlineartime}. 
    They are also available in the \emph{goodpoints} package.}
    We recall again that the main bottleneck in the setting of expensive integrands is the number $n$ of evaluations of the integrand $f$, so that runtimes are secondary.

    In Figure~\ref{f:coverage_mh_vs_mala_c4}, we compare again the coverages obtained for $n = 100$ points approximately drawn from the Gibbs measure $\mathbb{P}_{n, \beta_n}^{V^{\pi}}$, using either MALA~\eqref{e:mala_proposal} or a simple random-walk Metropolis--Hastings with Gaussian proposal and target $\mathbb{P}_{n, \beta_n}^{V^{\pi}}$.
    We do so for the two temperature schedules $\beta_n = n^2$ and $\beta_n = n^3$, each time approximating the potential $U_{K}^{\pi}$ with $M = 1\,000$ MCMC points.

    At temperature $\beta_n = n^2$, we see that MALA performs slightly better than random walk MH, which is to be expected since the proposal~\eqref{e:mala_proposal} is informed by the log density of $\mathbb{P}_{n, \beta_n}^{V^{\pi}}$.
    We see that the coverage obtained in Figure~\ref{f:coverage_mh_vs_mala_c4} slightly improves when we increase $\beta_n = n^2$ to $\beta_n = n^3$ for points approximately drawn using random-walk MH. 
    This is what we expect after a sufficient number of iterations according to Corollary~\ref{cor:mixing_time_gibbs}.
    This is however not the case when using MALA, which means that more iterations are indeed needed in that case when going from $\beta_n = n^2$ to $\beta_n = n^3$ to produce good approximate samples from $\mathbb{P}_{n, \beta_n}^{V^{\pi}}$.
    Thus, while a reasonable dependence of the mixing time \eqref{e:mixing_time_tv_gibbs} in $\beta_n$ can be expected for random-walk MH, this might not be the case with MALA.
    We will discuss ways to improve MALA in Section~\ref{s:discussion}, along with other choices of Markov kernels that might help sampling.
    Note also that each run to approximately draw from our Gibbs measure with random-walk Metropolis--Hastings took around $15$ seconds on average, which is much cheaper than with MALA and comparable to the vanilla version of kernel thinning that we used.
    This is due to the fact that the proposal~\eqref{e:mala_proposal} requires additional computation of the gradient of $H_n$ at each step.

\subsection{Comparing worst-case errors}
In this experiment, we take for $\pi$ the uniform measure on the unit ball of $\mathbb{R}^{d}$, with $d = 3$. 
We compare, for various values $n$ of the number of quadrature nodes, the worst-case integration error $I_K$ of $(i)$ the empirical measure $\mu_n^{\mathrm{MCMC}}$ of an MH chain of length $n$ targeting $\pi$, with an isotropic Gaussian proposal with variance $0.05 I_d$, and of $(ii)$ the empirical measure $\mu_n$ of an approximate sample of $\PP$.
For the latter, we run MALA for $T=5000$ iterations. 
We consider two interaction kernels, the Gaussian kernel and the truncated Riesz kernel, 
$$
K_{1}(x, y) = \exp\left(- \lvert x  - y\vert^{2}/2\right), \text{ and } K_{2}(x, y) = \left(\lvert x - y \rvert^{2} + 0.1^{2}\right)^{-(d-2)/2}.
$$
For $K\in\{K_1, K_2\}$, we set $V$ to $V^\pi$ as in Proposition~\ref{p:inv}, with $U_{K}^{\pi}$ replaced by $$M^{-1}\sum_{i = 1}^{M} K(\cdot, z_i)$$, where $z_{1}, \dots, z_{M}$ are a sample from an MH chain of length $M = 1\,000$ targeting $\pi$, independent from any other sample.

Figure~\ref{f:three_energies_var:energy_gauss} and \ref{f:three_energies_var:energy_riesz} show the results for $K_1$ and $K_2$, respectively. 
For each value of $n$, we plot an independent approximation of $I_K(\mu-\pi)$, again approximated using long MH chains targeting $\pi$ in~\eqref{e:mmd_decomposition}, for $\mu \in \{\mu_n^{\mathrm{MCMC}}, \mu_n\}$ the empirical measure of either the baseline MH chain or an approximate draw of our Gibbs measure.
These experiments were obtained using GPU running time on Google Colab, with 12 GB of RAM.

\begin{figure}
    \centering
    \subfloat[Gaussian kernel\label{f:three_energies_var:energy_gauss}]{%
        \includegraphics[width=\threefig]{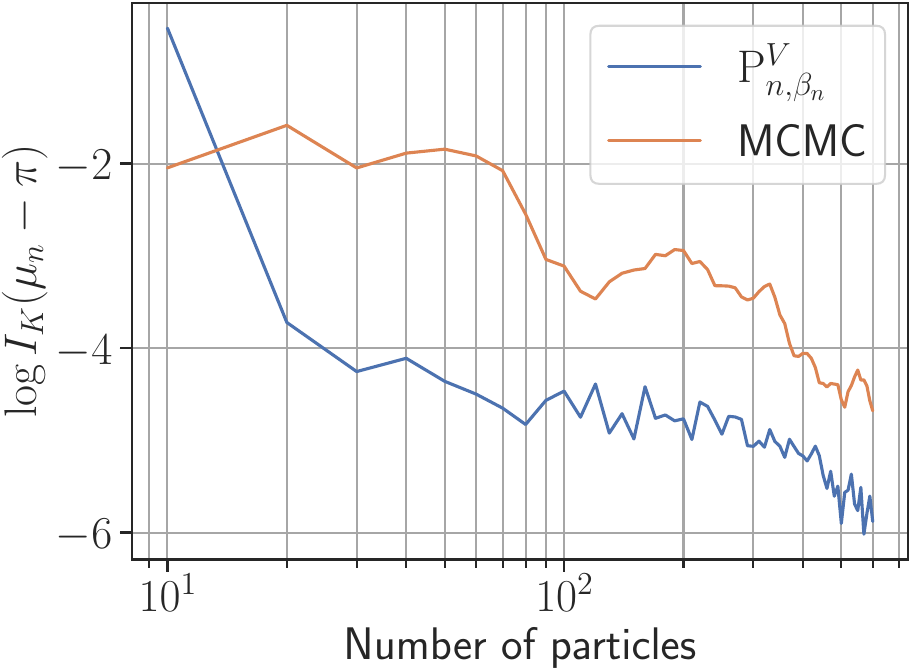}
    }
    \subfloat[Truncated Riesz kernel\label{f:three_energies_var:energy_riesz}]{
        \includegraphics[width=\threefig]{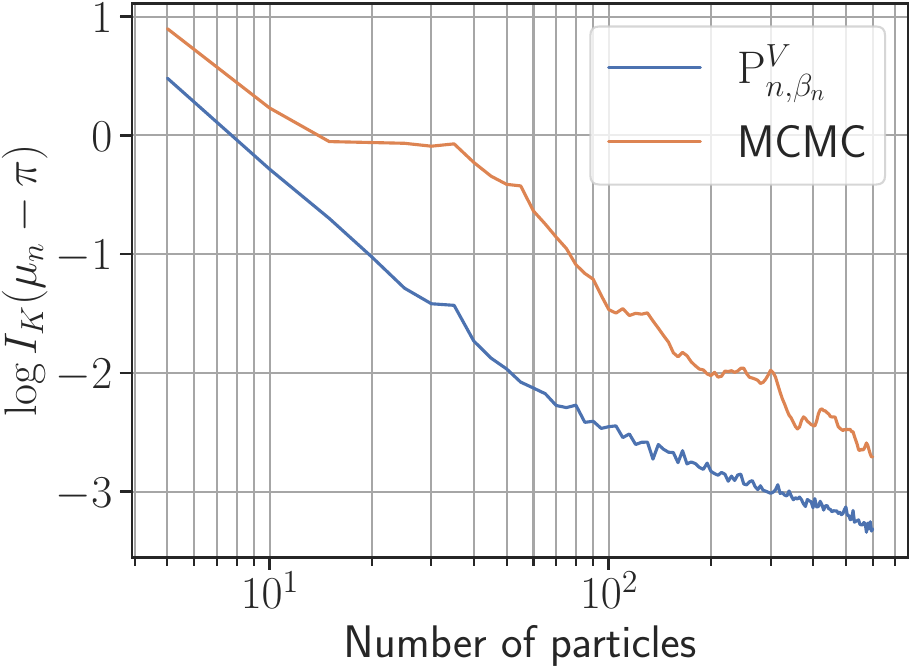}
    }
    \subfloat[Variance (truncated Riesz kernel)\label{f:three_energies_var:var_riesz}]{
        \includegraphics[width=\threefig]{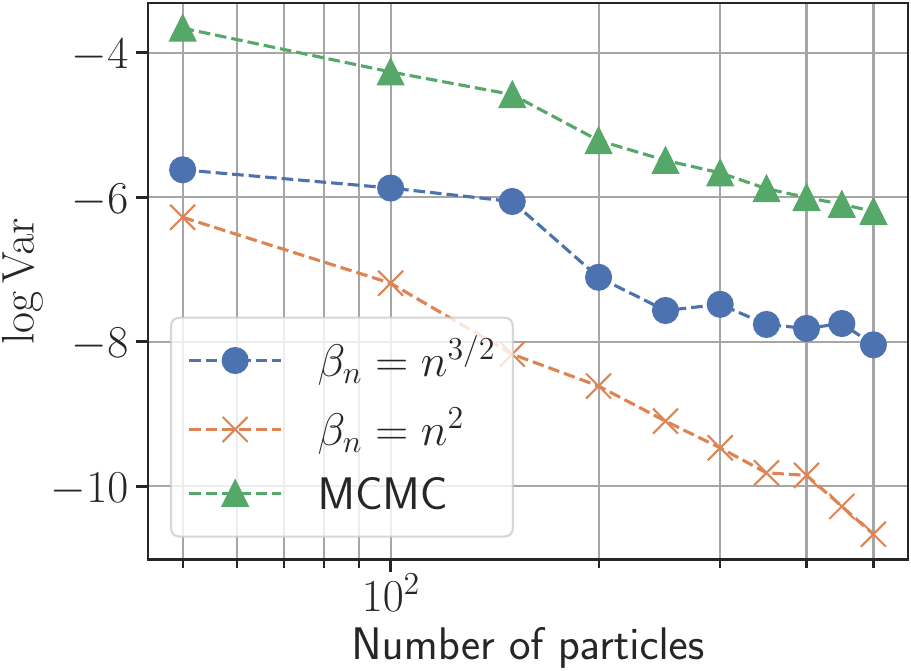}
    }
    \caption{Energy and variance comparisons between $\PP$ and MCMC samples.}
    \label{f:three_energies_var}
\end{figure}

We observe that $I_{K}(\cdot - \pi)$ decays at the same rate under the approximation of $\PP$ as for MCMC samples.
This was expected, since our concentration bound \eqref{c:sub_Gaussian_decay} recovers the $n^{-1}$ rate for the energy, the improvement rather being on the sub-Gaussian decay.
We see nonetheless that the approximated energy (and hence, the worst-case integration error) is consistently smaller by about a factor $3$ under $\PP$, which is also expected since $\PP$ favors small values of $I_K$ by definition.
We give a similar experiment in dimension $d = 10$ for a non-uniform target $\pi$ in Section~\ref{a:energy_10d}.

\subsection{Comparing variances for a single integrand}
    We know from classical CLT arguments that when $x_1, \dots, x_n$ are drawn from an MCMC chain targeting $\pi$, the variance of $n^{-1}\sum_{i = 1}^{n} f(x_i)$ scales like $n^{-1}$, under appropriate assumptions on $f$.
    While this is not at all implied by our Corollary~\ref{c:sub_Gaussian_decay}, analogies with the statistical physics literature make us expect a CLT to hold for our Gibbs measure, at rate ${\beta_n}^{-1/2}$, at least for some temperature schedules $(\beta_n)$ and smooth enough integrands.
    Such a result would imply asymptotic confidence intervals for single integrands of width decreasing like $\beta_n^{-1/2}$, a faster decay than standard Monte Carlo.
    To assess whether this expectation is reasonable, we consider again the setting where $\pi$ is uniform on the unit ball in $d=3$, the kernel is the truncated Riesz kernel $K_2$, and the integrand is $f : x \longrightarrow K(x, 0)$, which naturally belongs to $\mathcal{H}_K$.
    For each value of $n$ in Figure~\ref{f:three_energies_var:var_riesz}, we run $100$ independent MH chains of length $n$ targeting $\pi$, and plot the empirical variance of the $100$ ergodic averages. 
    Similarly, we run $100$ independent MALA chains targeting $\PP$, for $T = 5000$ iterations each, and plot the empirical variance of our estimator across these $100$ approximate samples.
    Again, $U_{K_{2}}^{\pi}$ is approximated through long MH chains. 
    These experiments were obtained using GPU running time on Google Colab, with 12 GB of RAM.
    We observe in Figure~\ref{f:three_energies_var:var_riesz} that the variance is noticeably smaller under the Gibbs measure, for both temperature schedules.
    Moreover, the rate of decay appears faster, at least in the ``usual" temperature schedule $\beta_n = n^{2}$.
    It is hard to be more quantitative, as the fact we use MALA, and with a fixed number of iterations across all values of $n$, may impact the convergence we see here.
    Still, the experiment supports our belief that a fast CLT holds, a long-time goal of the herding literature.

\section{Discussion}
\label{s:discussion}
Using a Gibbs measure that favors nodes that repel according to a kernel $K$, we improved on the non-asymptotic worst-case integration guarantees of crude Monte Carlo, through a concentration inequality with a fairly easy proof, at least compared to the classical results in statistical physics that inspired us.
    For a given precision, drawing points from this joint distribution leads to tighter confidence intervals than classical Monte Carlo, while --for now, and unfortunately-- preserving the same $n^{-1/2}$ rate for the MMD.
    We illustrated this improvement on toy examples in moderate dimension, and compared with state-of-the-art methods such as kernel thinning.

    In terms of future work, a strong argument in favor of a Gibbs measure would be a central limit theorem with a fast rate.
    As a comparison, though the fluctuations are not yet fully understood, it was shown in \citep{dwivedi2021generalized} that kernel thinning achieves a fast $n^{-1}\log n$ integration rate for any single integrand in the corresponding RKHS, a quadratic improvement over MCMC.
    We hope to match that rate \emph{and} characterize the shape of the fluctuations.
We showed experimental evidence that supports our intuition that a fast CLT can hold for our Gibbs measure. 

Two limitations of our approach that deserve further inquiry are the impact of using an approximation of the kernel embedding $U_{K}^{\pi}$ and an MCMC sampler, here MALA.
Integrating a tractable approximation of $U_{K}^{\pi}$ without loss on the convergence speed would be an important improvement.
For instance in dimension $1$, a stability result for the equilibrium measure is shown in \citep{maida2014free}, with respect to a modification of the external confinement $V$ which in our case amounts to the kernel embedding.
Simultaneously, understanding how the accuracy of the MALA approximation relates to $n$ and $\beta_n$ would help find the right trade-off between statistical accuracy and computational cost, \rev{following Corollary~\ref{cor:mixing_time_gibbs}}.
For that purpose, we note that mixing time guarantees have been obtained in non-convex settings in \citep{ma2019sampling}, which support our hope to obtain polynomial mixing times for a MCMC kernel tailored to our Gibbs measure, at least in the regime $\beta_n = \mathcal{O}(n^2)$.
\rev{At lower temperatures, we expect the sampling to get harder. Apart from our numerical observations in Section~\ref{s:experiments}, this is also related to the fact that achieving a polynomial mixing time for such Gibbs measures on the sphere with $\beta_n = n^3$ would give a probabilistic answer to Smale's 7th problem, see \citep{BeHa19}}.
Moreover, it might be possible to explain the difficulty to sample at low temperatures (e.g., $\beta_n=n^3$) by lower bounding an escape probability as in \citep[Theorem 3.4]{gamarnik2023algorithmic}.
Yet in a different line of work, there are positive results that reduce sampling from Gibbs measures to sampling independent sets in a random graph \citep{friedrich2025sampling}, leading to an approximate sampler in polynomial time under strong conditions on the interaction potential.

Finally, compared to the bound \eqref{e:ayoub_concentration} for volume sampling, our concentration bound features $\beta_n$, but not the eigenvalues of the kernel operator. 
In that sense, our confidence intervals are likely to be looser in an RKHS with fast-decaying kernel eigenvalues.
Yet, establishing a CLT for the estimator in \eqref{e:ayoub_concentration}, where the weights in the estimator depend on all quadrature nodes, promises to be much harder than for our Gibbs estimator. 
    Moreover, one evaluation of our Gibbs density is quadratic in $n$, while it is cubic for volume sampling, a fact that may influence the total cost of approximate sampling using MCMC.

\section{Proofs}
\label{s:proofs}

\subsection{Proof of Proposition~\ref{p:eq}}

We first recall the result.
 \begin{proposition}
Let $K$ satisfy Assumptions~\ref{a:bounded_kernel} and \ref{a:isdp_kernel}, and $V$ satisfy Assumption~\ref{a:confinement}. Then 
\begin{enumerate}
    \item $I_{K}^{V}$ is lower semi-continuous, has compact level sets and 
    $
         I_{K}^{V}(\mu) > - \infty
    $
    for any probability distribution $\mu$ on $\mathbb{R}^d$.
    \item If $\mu \in \mathcal{E}_{K}^{V}$, then $I_{K}(\mu)$ and $\int \lvert V\rvert\,\mathrm{d}\mu$ are finite, and 
    $
        I_{K}^{V}(\mu) = \frac{1}{2}I_{K}(\mu) + \int V\,\mathrm{d}\mu.
    $
    \item $I_{K}^{V}$ is strictly convex on the convex non-empty set $\mathcal{E}_{K}^{V}$.
    \item $I_{K}^{V}$ has a unique minimizer $\mu_V$ over the set of probability measures on $\mathbb{R}^{d}$, called the \emph{equilibrium measure}, and the support of $\mu_V$ is compact.
\end{enumerate}
\end{proposition}
The proof is a simple application of known results.
\begin{enumerate}
    \item This is a consequence of the first point of Theorem $1.1$ in \citep{chafai2014first};
    \item This is given by the second point of Lemma $2.2$ in \citep{chafai2014first};
    \item This is a consequence of the ISDP assumption on the kernel, using Lemma $3.1$ in \citep{pronzato2020bayesian}.
    To see that $\mathcal{E}_{K}^{V}$ is non-empty, we can simply consider Dirac measures $\delta_x$, since $K$ is bounded and $V$ is finite everywhere.
    \item This is a consequence of points $1$ and $3$ using the arguments of Section $4.1$ of \citep{chafai2014first}.
\end{enumerate}

\subsection{Proof of Proposition~\ref{p:inv}}
The proof of Proposition~\ref{p:inv} relies on the so-called Euler-Lagrange equations, which we recall here.

\begin{lemma}\label{p:eul}
Let $K$ be a kernel satisfying Assumptions~\ref{a:bounded_kernel} and \ref{a:isdp_kernel}, and $V$ be an external potential satisfying Assumption~\ref{a:confinement}.
Set $C_V = I_{K}(\mu_V) + \int V\,\mathrm{d}\mu_V$.
Then $\mu_V$ has compact support, and a probability measure $\nu$ satisfies $\nu = \mu_V$ if and only if $\nu$ has compact support and there exists a constant $C$ such that 
\begin{itemize}
    \item[(i)] $U_{K}^{\nu}(z) + V(z) \geq C$ for all $z \in \mathbb{R}^{d}$;
    \item[(ii)] $U_{K}^{\nu}(z) + V(z) \leq C$ for all $z \in supp(\nu)$.
\end{itemize}
In that case, $C = C_V$.
\end{lemma}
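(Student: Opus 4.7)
The approach is the classical variational one: the forward implication comes from a first-order perturbation of the minimizer, and the reverse implication uses integral strict positive definiteness together with the uniqueness of the minimizer established in Proposition \ref{p:eq}. Compactness of $\text{supp}(\mu_V)$ is already part of Proposition \ref{p:eq}, so the real content is the characterization.

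For the forward direction, assume $\nu = \mu_V$ and let $\mu$ be any probability measure with $I_K^V(\mu) < \infty$. Form the interpolation $\mu_\epsilon = (1-\epsilon)\mu_V + \epsilon\mu$ for $\epsilon \in [0,1]$. Since $I_K$ is bilinear on $\mathcal{E}_K^V$, the map $\epsilon \mapsto I_K^V(\mu_\epsilon)$ is a polynomial of degree $2$, and by minimality of $\mu_V$ together with convexity (Proposition \ref{p:eq}.3), its right derivative at $\epsilon = 0$ is $\geq 0$. A direct computation gives
$$\int \bigl(U_K^{\mu_V} + V\bigr)\,\mathrm{d}\mu \;\geq\; I_K(\mu_V) + \int V\,\mathrm{d}\mu_V \;=\; C_V.$$
Specializing to $\mu = \delta_z$ is legitimate because Assumption \ref{a:bounded_kernel} and the finiteness of $V$ give $I_K^V(\delta_z) < \infty$ for every $z$; this produces (i) with $C = C_V$. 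For (ii), integrating (i) against $\mu_V$ yields $\int(U_K^{\mu_V}+V)\,\mathrm{d}\mu_V \geq C_V$, but this integral also equals $C_V$ by definition, so $U_K^{\mu_V} + V = C_V$ holds $\mu_V$-almost everywhere. Since $K$ is continuous and bounded, dominated convergence ensures $U_K^{\mu_V}$ is continuous, so $U_K^{\mu_V} + V$ is lower semi-continuous by \cref{a:confinement}; combining this with the $\mu_V$-a.e.~equality and the definition of the support extends the equality to every $z \in \text{supp}(\mu_V)$.

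For the reverse direction, suppose $\nu$ has compact support and satisfies (i)–(ii) with some constant $C$. Integrating (i) against $\nu$ gives $I_K(\nu) + \int V\,\mathrm{d}\nu \geq C$, while integrating (ii) against $\nu$ gives the reverse inequality, so $C = I_K(\nu) + \int V\,\mathrm{d}\nu$. Now for any probability measure $\mu$ with $I_K^V(\mu) < \infty$, integrating (i) against $\mu$ gives $I_K(\nu,\mu) + \int V\,\mathrm{d}\mu \geq I_K(\nu) + \int V\,\mathrm{d}\nu$. Expanding $I_K(\mu-\nu) \geq 0$ (\cref{a:isdp_kernel}) as $I_K(\mu) \geq 2I_K(\mu,\nu) - I_K(\nu)$ and substituting yields
$$I_K^V(\mu) - I_K^V(\nu) \;=\; \tfrac{1}{2}\bigl[I_K(\mu) - I_K(\nu)\bigr] + \int V\,\mathrm{d}(\mu-\nu) \;\geq\; I_K(\nu,\mu) + \int V\,\mathrm{d}\mu - I_K(\nu) - \int V\,\mathrm{d}\nu \;\geq\; 0.$$
Thus $\nu$ minimizes $I_K^V$, and by uniqueness (Proposition \ref{p:eq}.4), $\nu = \mu_V$ and $C = C_V$.

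The main delicate point I expect is the extension from $\mu_V$-a.e.~equality to equality on the whole support in (ii): this relies on the continuity of $U_K^{\mu_V}$ (a consequence of $K$ being continuous and bounded) and on $V$ being only lower semi-continuous, and must be combined carefully with the pointwise lower bound from (i). Everything else is a bookkeeping exercise in bilinearity and ISDP, modulo checking that interpolations stay in $\mathcal{E}_K^V$ so that the variation is well defined.
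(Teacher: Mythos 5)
Your proof is correct and follows essentially the same route the paper takes, namely the classical Euler--Lagrange variational argument: first variation of $I_K^V$ along interpolations plus testing against Dirac masses for $(i)$, a.e.~equality upgraded to the support via lower semi-continuity for $(ii)$, and a reverse implication that identifies $\nu$ as the minimizer. The paper's proof defers most of these steps to \citet{chafai2014first}; you spell them out explicitly, and your treatment of the a.e.-to-support upgrade (the closed set $\{U_K^{\mu_V}+V\le C_V\}$ has full $\mu_V$-measure, hence contains $\mathrm{supp}(\mu_V)$) is exactly the right argument. The one slight stylistic divergence is in the reverse direction: the paper appeals to the strict convexity of $I_K^V$, whereas you show directly via $I_K(\mu-\nu)\ge 0$ that $\nu$ minimizes and then invoke uniqueness from \cref{p:eq}; the two are equivalent in substance since uniqueness itself comes from strict convexity, but your version is a little cleaner. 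A minor point you could make explicit: in the reverse direction, $(ii)$ together with $K\ge 0$ forces $V\le C$ on $\mathrm{supp}(\nu)$, and lower semi-continuity on the compact support gives $V$ bounded below there, so $I_K^V(\nu)<\infty$ and $\nu\in\mathcal{E}_K^V$ — this is what licenses comparing energies.
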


\begin{proof}
    Let us first check that $\mu_V$ satisfies this characterization.
    We already know from Proposition~\ref{p:eq} that $\mu_V$ exists, is unique, and has compact support.
    We can use the same procedure as \citet[Theorem $1.2$, proof of item $5$]{chafai2014first} : considering the directional derivative of $I_{K}^{V}$ and using the fact that $\mu_V$ is the minimizer, their equation ($4.5$) yields that, for any probability distribution $\nu \in  \mathcal{E}_{K}^{V}$, 
    \[
        \int \left(V + U_{K}^{\mu_V} - C_V\right)\,\mathrm{d}\nu \geq 0.
    \]
    Since Dirac measures $\delta_z$ have finite interaction energy $I_{K}^{V}(\delta_z)$, we get Point $(i)$ by taking $\nu = \delta_z$.
    The second point is obtained exactly as in the second part of the proof of item $5$ of Theorem $1.2$ of \citep{chafai2014first}.
    Finally, the converse implication can be similarly obtained along the lines of the proof of item $6$ of Theorem $1.2$ in \citep{chafai2014first}, using the strict convexity of the energy functional in Proposition~\ref{p:eq}.
\end{proof}

We are now ready to prove Proposition~\ref{p:inv}, by checking that the Euler-Lagrange equations are satisfied for $\pi$ and $V^{\pi}$, and that $V^{\pi}$ satisfies the assumptions of Proposition~\ref{p:eq}.

First note that since the kernel $K$ is nonnegative and bounded on the diagonal by assumption, Cauchy-Schwarz in the RKHS $\mathcal{H}_{K}$ implies that $K(x, y) \leq C$ for all $x, y \in \mathbb{R}^{d}$.
As a consequence, since $K$ is further assumed to be continuous, Lebesgue's dominated convergence theorem yields that $z \longmapsto U_{K}^{\pi}(z)$ is finite everywhere and continuous.
In particular, $V^\pi$ is continuous.

Moreover, the bound on $K$ induces $0 \leq U_{K}^{\pi}(z) \leq C$ for any $z \in \mathbb{R}^{d}$, so that $V^{\pi}(z) \longrightarrow + \infty$ when $\lvert z \rvert \longrightarrow +\infty$.
Finally, the integrability assumption of Assumption~\ref{a:confinement} is satisfied by our assumption on $\Phi$ and because $U_{K}^{\pi}$ is bounded. 
Hence, $V^{\pi}$ satisfies Assumption~\ref{a:confinement} and the equilibrium measure $\mu_{V^{\pi}}$ is well-defined.
We conclude upon noting that, since $\Phi \geq 0$, Proposition~\ref{p:eul} yields that $\mu_{V^{\pi}} = \pi$.

\subsection{Proof of Theorem~\ref{th:concentration_inequality}}

The proof will follow the one of \citet{chafai2018concentration}, with some notable simplifications.
We first compute a lower bound on the partition function $Z_{n, \beta_n}^{V}$, which generalizes the one of \citet{chafai2018concentration} to bounded kernels.

\begin{proposition}
    \label{p:lower_bound_on_the_normalization_constant}
    Let $K$ be a kernel satisfying Assumptions~\ref{a:bounded_kernel} and~\ref{a:isdp_kernel}, and $V$ be an external potential satisfying Assumption~\ref{a:confinement}.
    Assume that the associated equilibrium measure $\mu_V$ has finite entropy, i.e. $S(\mu_V) = -\int \log\mu'_V \d\mu_V <\infty$, where $\mu_V'$ is the density of $\mu_V$ w.r.t. the Lebesgue measure.
    Then for $n \geq 2$, we have
    \[
        Z_{n, \beta_n}^{V} \geq \exp\left\{- \beta_n I_{K}^{V}(\mu_V) + n\left(\frac{\beta_n}{2 n^2} I_{K}(\mu_V) + S(\mu_V)\right)\right\}.
    \]
\end{proposition}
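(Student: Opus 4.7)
The plan is to apply Jensen's inequality with the tensorized equilibrium measure $\mu_V^{\otimes n}$ as a test distribution. Since $\mu_V$ has finite entropy by assumption, it has a density $\mu_V'$ with respect to Lebesgue measure on its (compact) support. I would first restrict the domain of integration:
\begin{equation*}
    Z_{n,\beta_n}^V \;\geq\; \int_{\mathrm{supp}(\mu_V)^n} e^{-\beta_n H_n(x_1,\dots,x_n)}\,\d x_1\cdots \d x_n
    \;=\; \int e^{-\beta_n H_n - \sum_{i=1}^n \log \mu_V'(x_i)} \, \d\mu_V^{\otimes n}(x_1,\dots,x_n),
\end{equation*}
where in the second equality I have multiplied and divided by $\prod_i \mu_V'(x_i)$ on the support.

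Next, because $\mu_V^{\otimes n}$ is a probability measure, Jensen's inequality applied to $\exp(\cdot)$ gives
\begin{equation*}
    Z_{n,\beta_n}^V \;\geq\; \exp\!\Bigl(-\beta_n \!\int\! H_n\,\d\mu_V^{\otimes n} \;-\; n\!\int \log \mu_V' \,\d\mu_V\Bigr)
    \;=\; \exp\!\Bigl(-\beta_n \!\int\! H_n\,\d\mu_V^{\otimes n} + n\, S(\mu_V)\Bigr).
\end{equation*}
The second integral is then computed by exploiting the product structure: the pairwise term yields $n(n-1)$ identical contributions and the diagonal potential term yields $n$ identical contributions, so
\begin{equation*}
    \int H_n \,\d\mu_V^{\otimes n} \;=\; \frac{n-1}{2n}\, I_K(\mu_V) + \int V\,\d\mu_V.
\end{equation*}

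Finally, I would use Point 2 of \cref{p:eq}, namely $I_K^V(\mu_V) = \tfrac{1}{2} I_K(\mu_V) + \int V\,\d\mu_V$, to rewrite
\begin{equation*}
    -\beta_n\!\left(\frac{n-1}{2n} I_K(\mu_V) + \!\int V\,\d\mu_V\right) = -\beta_n I_K^V(\mu_V) + \frac{\beta_n}{2n} I_K(\mu_V),
\end{equation*}
which is exactly $-\beta_n I_K^V(\mu_V) + n\cdot\tfrac{\beta_n}{2n^2} I_K(\mu_V)$. Combining with the $nS(\mu_V)$ contribution gives the claimed bound. The only subtle step is the legitimacy of the change of measure: I must check that $\mu_V'$ is positive $\mu_V$-almost everywhere (true by definition of the density on its support) and that the integrand $H_n$ is $\mu_V^{\otimes n}$-integrable, which follows from $K$ being bounded (\cref{a:bounded_kernel}) and $V$ being integrable against $\mu_V$ (Point 2 of \cref{p:eq}, using that $\mu_V \in \mathcal{E}_K^V$). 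The finiteness of $S(\mu_V)$ is precisely the hypothesis that ensures the entropy contribution is a genuine lower bound rather than $-\infty$.
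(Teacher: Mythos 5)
Your proof is correct and follows essentially the same route as the paper's: restrict the integral to the support of $\mu_V$, change measure to $\mu_V^{\otimes n}$, apply Jensen's inequality to $\exp$, and compute the resulting averages by symmetry. The only cosmetic difference is that the paper first rewrites $n^2 H_n$ as $\tfrac12\sum_{i\neq j}\{K(x_i,x_j)+V(x_i)+V(x_j)\}+\sum_i V(x_i)$ so the $I_K^V(\mu_V)$ term appears directly after Jensen, whereas you apply Jensen to $H_n$ in its original form and substitute $I_K^V(\mu_V)=\tfrac12 I_K(\mu_V)+\int V\,\d\mu_V$ at the end; the algebra is equivalent. Your closing remarks on the integrability requirements are a nice addition, though note that what actually prevents the bound from degenerating is $S(\mu_V)>-\infty$ (the density must not be too spiky); the bound $S(\mu_V)<+\infty$ is automatic here since $\mu_V$ has compact support.
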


\begin{proof}
The idea is to rephrase a bit the discrete energy $H_n$ and use Jensen's inequality.

We let $X_n = (x_1, \dots, x_n)$ for brevity, and we start by writing 
\[
    n^2 H_{n}(X_n) = \frac{1}{2} \sum_{i \neq j}\{ K(x_i, x_j) + V(x_i) + V(x_j)\} + \sum_{i = 1}^{n} V(x_i).
\]
Then
\begin{align*}
    \log  Z_{n, \beta_{n}}^{V} &= \log \int_{\left(\mathbb{R}^{d}\right)^n}\exp\left( - \beta_n H_n(X_n)\right) \mathrm{d}x_1\dots\,\mathrm{d}x_n\\
    &\geq \log \int_{E_{V}^{n}} \exp\left( -\frac{\beta_n}{2 n^2} \sum_{i \neq j} \{ K(x_i, x_j) + V(x_i) + V(x_j)\}\right)\\
    & \times \exp\left(- \sum_{i = 1}^{n}\left( \frac{\beta_{n}}{n^2} V(x_i) + \log \mu'_V(x_i)\right)\right) \mathrm{d}\mu_{V}(x_1)\,\mydots\,\mathrm{d}\mu_{V}(x_n),
\end{align*}
where $E_{V}^{n} = \{(x_1, \dots, x_n) \in (\mathbb{R}^{d})^{n} : \prod_{i = 1}^{n} \mu'_V (x_i) > 0\}$. 
Using Jensen's inequality, we get 
\begin{align*}
\log Z_{n, \beta_n}^{V} &\geq -\frac{\beta_n}{ n^2} \sum_{i \neq j} \frac{1}{2} \int_{E_{V}^{n}} \left(K(x_i, x_j) + V(x_i) + V(x_j)\right) \mathrm{d}\mu_{V}(x_1)\,\mydots\,\mathrm{d}\mu_{V}(x_n)\\
& \quad - \sum_{i = 1}^{n} \int_{E_{V}^{n}} \left(\frac{\beta_{n}}{n^2} V(x_i)+\log \mu'_{V}(x_i)\right) \mathrm{d}\mu_{V}(x_1)\,\mydots\,\mathrm{d}\mu_{V}(x_n)\\
&= -\frac{\beta_n}{n^2}n(n-1) I_{K}^{V}(\mu_V) - \frac{\beta_n}{n}\int V\mathrm{d}\mu_V + n S(\mu_V).
\end{align*}
Using the definition $I_{K}^{V}(\mu_V) = \frac{1}{2} I_{K}(\mu_V) + \int V \mathrm{d}\mu_V$ we get the result.
\end{proof}

To bound $I_{K}(\mu_n - \mu_V)$, we shall further use the following lemma, which is again inspired by \citep{chafai2018concentration}.

\begin{lemma}\label{l:1}
    Let $K$ and $V$ be a kernel and an external potential satisfying Assumptions~\ref{a:bounded_kernel} to \ref{a:confinement}.
    Let $\mu$ be any probability measure of finite energy, $I_{K}^{V}(\mu)<+\infty$.
    Then 
    \[
        I_{K}(\mu - \mu_V) \leq 2\left(I_{K}^{V}(\mu) - I_{K}^{V}(\mu_V)\right).
    \]
\end{lemma}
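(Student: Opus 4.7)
The plan is to reduce the stated inequality to the Euler--Lagrange characterization of the equilibrium measure $\mu_V$ proved in Lemma \ref{p:eul}, by a straightforward algebraic rearrangement of the quadratic energy $I_K^V$.

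First, I would expand the bilinear form on the left-hand side as
\begin{equation*}
    I_K(\mu - \mu_V) = I_K(\mu) - 2 I_K(\mu, \mu_V) + I_K(\mu_V),
\end{equation*}
which is legitimate because $\mu, \mu_V$ both have finite energy $I_K$ (by assumption for $\mu$, and by \cref{p:eq} item 2 for $\mu_V$, which lies in $\mathcal{E}_K^V$ since $\mu_V$ minimises $I_K^V$ over probabilities). On the right-hand side, I would use \cref{p:eq} item 2 to rewrite
\begin{equation*}
    2\bigl(I_K^V(\mu) - I_K^V(\mu_V)\bigr) = I_K(\mu) - I_K(\mu_V) + 2\int V \,\d(\mu - \mu_V).
\end{equation*}

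Subtracting the first display from the second, the inequality to prove reduces to
\begin{equation*}
    \int \bigl(U_K^{\mu_V} + V\bigr)\,\d\mu \;\geq\; \int \bigl(U_K^{\mu_V} + V\bigr)\,\d\mu_V,
\end{equation*}
after recognising that $I_K(\mu_V) = \int U_K^{\mu_V}\,\d\mu_V$ and $I_K(\mu, \mu_V) = \int U_K^{\mu_V}\,\d\mu$ via Fubini.

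At this point I would invoke the Euler--Lagrange characterization in \cref{p:eul}: there is a constant $C_V$ such that $U_K^{\mu_V}(z) + V(z) \geq C_V$ for every $z \in \mathbb{R}^d$, with equality on $\operatorname{supp}(\mu_V)$. Integrating the equality against $\mu_V$ yields $\int (U_K^{\mu_V} + V)\,\d\mu_V = C_V$, while integrating the pointwise lower bound against $\mu$ yields $\int (U_K^{\mu_V} + V)\,\d\mu \geq C_V$. Combining these two facts gives the required inequality, and tracing back through the rearrangement closes the proof.

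The only delicate point is ensuring that all the integrals above are well-defined and finite so that the arithmetic is meaningful. This is where the hypothesis $I_K^V(\mu) < +\infty$ is used: by \cref{p:eq} item 2 it forces $\int |V|\,\d\mu < \infty$ and $I_K(\mu) < \infty$, which together with the Cauchy--Schwarz bound $I_K(\mu,\mu_V)^2 \leq I_K(\mu)\, I_K(\mu_V)$ (valid in the RKHS $\mathcal{H}_K$) guarantees finiteness of the cross term. Boundedness of $U_K^{\mu_V}$ (a consequence of \cref{a:bounded_kernel} via Cauchy--Schwarz in $\mathcal{H}_K$) also ensures that $\int U_K^{\mu_V}\,\d\mu$ is finite, so no delicate truncation argument is needed --- a welcome simplification compared with the singular-kernel case treated in \citet{chafai2018concentration}.
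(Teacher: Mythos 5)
Your proposal is correct and follows essentially the same route as the paper's proof: expand the quadratic form $I_K(\mu-\mu_V)$, then bound the cross term via Fubini and the Euler--Lagrange characterization $U_K^{\mu_V}+V\geq C_V$ everywhere, with equality on $\operatorname{supp}(\mu_V)$. The only difference is cosmetic (you make the rearrangement of the right-hand side explicit before invoking Euler--Lagrange, whereas the paper plugs the Euler--Lagrange inequality directly into the expansion), and your finiteness remarks are sound, if slightly more than needed since boundedness of $K$ already makes $I_K(\mu,\mu_V)$ finite without Cauchy--Schwarz.
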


\begin{proof}
    We write 
    \begin{equation}
        I_{K}(\mu - \mu_V) = \iint K(x, y)\,\mathrm{d}(\mu - \mu_V)^{\otimes 2} = I_{K}(\mu) - 2 I_{K}(\mu, \mu_V) + I_{K}(\mu_V).
        \label{e:decomposition}
    \end{equation}
    With the notation of Proposition~\ref{p:eul}, we know that $U_{K}^{\mu_V}(z) + V(z) = C_V$ for all $z$ in the support of $\mu_V$, and that $U_{K}^{\mu_V}(z) + V(z) \geq C_V$ in general.
    Using Fubini, we thus see that 
    \begin{align*}
        I_{K}(\mu, \mu_V) + \int V\,\mathrm{d}\mu &= \int \left\{ U_{K}^{\mu_V}(z) + V(z)\right\}\,\mathrm{d}\mu(z)\\
        &\geq C_V = I_{K}(\mu_V) + \int V\,\mathrm{d}\mu_V.
    \end{align*}
    Plugging this into \eqref{e:decomposition} yields the result.
\end{proof}

We are now ready to prove Theorem~\ref{th:concentration_inequality}.
Recall that we work under the assumption $\beta_n \geq 2 cn$ where $c$ is the constant of Assumption~\ref{a:confinement}.
Consider a Borel set $A \subset \left(\mathbb{R}^{d}\right)^{n}$. 
For brevity, recall that for $(x_k)_{k\leq n} \in A$, we write $\mu_n = \frac{1}{n}\sum_{i = 1}^{n} \delta_{x_i}$ and $X_n = (x_1, \dots, x_n)$.

Let $\eta > 0$ and $X_n\in A$.
The key idea is to split $H_n(X_n) = (1-\eta) H_n(X_n) + \eta H_n(X_n)$. 
The first part will be compared with the energy $I_{K}^{V}(\mu_n)$, while the second part will be kept to ensure integrability.
Remembering that 
\[
    n^2 H_n(X_n) = n^2 I_{K}^{V}(\mu_n) - \frac{1}{2}\sum_{i = 1}^{n} K(x_i, x_i),
\]
we write, by definition,
\begin{align}
    \mathbb{P}_{n, \beta_n}^{V}(A) &= \frac{1}{Z_{n, \beta_n}^{V}} \int_{ A} \exp\left( -\beta_n H_n(X_n)\right) \,\mathrm{d}x_1\,\dots\,\mathrm{d}x_n\\
    &= \frac{1}{Z_{n, \beta_n}^{V}} \exp\left(-\beta_n (1-\eta) I_{K}^{V}(\mu_V)\right) \int_{A} \exp\left(-\beta_n (1-\eta) \left(I_{K}^{V}(\mu_n) - I_{K}^{V}(\mu_V)\right)\right)\nonumber\\
    & \quad \times \exp\left(\frac{\beta_n}{2 n^2} (1-\eta) \sum_{i = 1}^{n} K(x_i, x_i) - \beta_n \eta H_n(X_n)\right) \,\mathrm{d}x_1\,\mydots\,\mathrm{d}x_n.
\end{align}    
Using Proposition~\ref{p:lower_bound_on_the_normalization_constant}, we continue
\begin{align}
    \mathbb{P}_{n, \beta_n}^{V}(A) 
    &\leq \exp\left(- n \left(S(\mu_V) + \frac{\beta_n}{2n^2} I_{K}(\mu_V)\right) + \beta_n \eta I_{K}^{V}(\mu_V)\right)\\
    & \quad \times \exp\left(-\beta_n (1-\eta) \underset{A}{\,\inf}\left( I_{K}^{V}(\mu_n) - I_{K}^{V}(\mu_V)\right)\right)\\ 
    & \quad \times \int_{\mathbb{R}^{d}} \exp\left(\frac{\beta_n}{2 n^2} (1-\eta) \sum_{i = 1}^{n} K(x_i, x_i) - \beta_n \eta H_n(X_n)\right) \,\mathrm{d}x_1\,\mydots\,\mathrm{d}x_n. \label{e:intermediate_step}
\end{align}
As noted in the proof of Proposition~\ref{p:eul}, $0\leq K\leq C$, so that the last integral in \eqref{e:intermediate_step} is easily bounded,
\begin{align*}
    &\int_{\mathbb{R}^{d}} \exp\left(\frac{\beta_n}{2 n^2} (1-\eta) \sum_{i = 1}^{n} K(x_i, x_i) - \beta_n \eta H_n(X_n)\right) \,\mathrm{d}x_1\,\dots\,\mathrm{d}x_n\\
    &\quad \leq \exp\left(\frac{\beta_n}{2 n} (1-\eta) C\right) \left(\int_{\mathbb{R}^{d}}\exp\left(-\frac{\beta_n}{n} \eta V(x)\right)\,\mathrm{d}x\right)^n.
\end{align*}
Now we choose a particular value for $\eta$, namely $\eta = c n / \beta_n$ where $c$ is the constant of Assumption~\ref{a:confinement}.
Further let $C_2 = \log \int_{\mathbb{R}^{d}}\exp\left(- c V(x)\right)\,\mathrm{d}x$, which is finite by assumption.
Setting $c_1 = c I_{K}^{V}(\mu_V) + C_2 - S(\mu_V)$ and $c_2 = \frac{1}{2}C - \frac{1}{2} I_{K}(\mu_V)$, \eqref{e:intermediate_step} yields 
\begin{equation}
    \label{e:almost_there}
    \mathbb{P}_{n, \beta_n}^{V}\left(A\right) \leq \exp\left(- (\beta_n - c n) \underset{A}{\,\inf}\left( I_{K}^{V}(\mu_n) - I_{K}^{V}(\mu_V)\right) + n c_1 + \frac{\beta_n}{n}c_2\right).
\end{equation}
Note that $c_1$ and $c_2$ are indeed finite by definition of $\mu_V$, since $\mathcal{E}_{K}^{V}$ is non-empty.
The comparison inequality of Lemma~\ref{l:1} applied to \eqref{e:almost_there}, with
\[
    A = \{I_{K}(\mu_n -\mu_V) > r^2 ; \mu_n = \frac{1}{n}\sum_{i = 1}^{n} \delta_{x_i}\},
\]
yields Theorem~\ref{th:concentration_inequality}, upon noting that $\beta_n - c n \geq \beta_n /2 $ by assumption.

\subsection{Proof of Proposition~\ref{l:bounds_constants_concentration}}
\begin{proof}
    Using the fact that $c = 1$ from Proposition~\ref{p:inv} and
    \begin{align}
    I_{K}^{V^{\pi}}(\pi) &= \frac{1}{2}\iint K(x, y)\,\mathrm{d}\pi(x)\,\mathrm{d}\pi(y) - \iint K(x, y)\,\mathrm{d}\pi(x)\,\mathrm{d}\pi(y) + \int_{\lvert x\rvert > R} \Phi(x) \,\mathrm{d}\pi(x)\nonumber\\
    &= - \frac{1}{2}I_{K}(\pi),
    \end{align}
    we first have
    \begin{align}
        \label{e:constants_concentration}
        &c_1 = - \frac{1}{2}I_{K}(\pi) - S(\pi) + \log\int_{\mathbb{R}^{d}} \exp\left(- V^{\pi}(x)\right)\,\mathrm{d}x,\\
        &c_2 = \frac{1}{2} C - \frac{1}{2}I_{K}(\pi).
    \end{align}
    Then, we use the fact that $0 \leq K(x, y)\leq C$ for all $x, y \in \mathbb{R}^{d}$ to get $I_{K}(\pi) \geq 0$ and $c_2 \leq C/2$.
    Moreover, using the definition of $V^{\pi}$ in Proposition~\ref{p:inv}, we have the bound
    \begin{align}
    \int_{\mathbb{R}^d} \exp\left(-V^{\pi}(z)\right)\,\mathrm{d}x \leq \int_{\lvert x\rvert < R}\,\exp(C)\mathrm{d}x + \int_{\lvert x\rvert > R}\exp(C)\exp\left(-\lvert x\rvert^2 + R^2\right)\,\mathrm{d}x,
    \end{align}
    since $U_{K}^{\pi}(x) \leq C$ for all $x\in \mathbb{R}^{d}$.
    For the first term, we use the fact that the volume of $B(0, R)$ is given by
    \begin{align}
    \left\lvert B(0, R)\right\rvert = \frac{\pi^{d/2}}{\Gamma\left(d/2+1\right)}R^d.
    \end{align}
    For the second term, we bound the tail of the Gaussian distribution
    \begin{align}
    \int_{\lvert x\rvert > R} \exp\left(-\lvert x\rvert^2\right)\,\mathrm{d}x \leq \int \exp\left(-\lvert x\rvert^2\right)\,\mathrm{d}x \leq \left(\sqrt{2}\pi\right)^{d/2}.
    \end{align}
    This yields
    \begin{align}
    c_1 \leq - S(\pi) + 2C + d\left(\frac{(1+\sqrt{2})\log\pi}{2} + \log R\right) - \log \Gamma\left(d/2+1\right)+R^2.
    \end{align}
\end{proof}

\section*{Acknowledgments}
This work was funded by ERC grant \textsc{Blackjack} ERC-2019-STG-851866, ANR grant \textsc{Baccarat} ANR-20-CHIA-0002 and Labex \textsc{CEMPI} ANR-11--LABX-0007-01.

\printbibliography

@misc{chatalic2025efficient,
	title = {Efficient Numerical Integration in Reproducing Kernel Hilbert Spaces via Leverage Scores Sampling},
	url = {http://arxiv.org/abs/2311.13548},
	doi = {10.48550/arXiv.2311.13548},
	abstract = {In this work we consider the problem of numerical integration, i.e., approximating integrals with respect to a target probability measure using only pointwise evaluations of the integrand. We focus on the setting in which the target distribution is only accessible through a set of \$n\$ i.i.d. observations, and the integrand belongs to a reproducing kernel Hilbert space. We propose an efficient procedure which exploits a small i.i.d. random subset of \$m{\textless}n\$ samples drawn either uniformly or using approximate leverage scores from the initial observations. Our main result is an upper bound on the approximation error of this procedure for both sampling strategies. It yields sufficient conditions on the subsample size to recover the standard (optimal) \$n{\textasciicircum}\{-1/2\}\$ rate while reducing drastically the number of functions evaluations, and thus the overall computational cost. Moreover, we obtain rates with respect to the number \$m\$ of evaluations of the integrand which adapt to its smoothness, and match known optimal rates for instance for Sobolev spaces. We illustrate our theoretical findings with numerical experiments on real datasets, which highlight the attractive efficiency-accuracy tradeoff of our method compared to existing randomized and greedy quadrature methods. We note that, the problem of numerical integration in {RKHS} amounts to designing a discrete approximation of the kernel mean embedding of the target distribution. As a consequence, direct applications of our results also include the efficient computation of maximum mean discrepancies between distributions and the design of efficient kernel-based tests.},
	number = {{arXiv}:2311.13548},
	publisher = {{arXiv}},
	author = {Chatalic, Antoine and Schreuder, Nicolas and Vito, Ernesto De and Rosasco, Lorenzo},
	urldate = {2025-06-17},
	date = {2025-06-16},
	eprinttype = {arxiv},
	eprint = {2311.13548 [stat]},
	keywords = {Computer Science - Numerical Analysis, Mathematics - Numerical Analysis, Statistics - Machine Learning, Computer Science - Machine Learning},
	file = {Preprint PDF:C\:\\Users\\rouau\\Zotero\\storage\\PKDBKMZH\\Chatalic et al. - 2025 - Efficient Numerical Integration in Reproducing Kernel Hilbert Spaces via Leverage Scores Sampling.pdf:application/pdf},
}

@article{gamarnik2023algorithmic,
  title={Algorithmic obstructions in the random number partitioning problem},
  author={Gamarnik, David and K{\i}z{\i}lda{\u{g}}, Eren C},
  journal={The Annals of Applied Probability},
  volume={33},
  number={6B},
  pages={5497--5563},
  year={2023},
  publisher={Institute of Mathematical Statistics}
}

@article{friedrich2025sampling,
	title = {Sampling repulsive Gibbs point processes using random graphs},
	volume = {34},
	issn = {0963-5483, 1469-2163},
	url = {https://www.cambridge.org/core/journals/combinatorics-probability-and-computing/article/sampling-repulsive-gibbs-point-processes-using-random-graphs/7F4628A3DC527C5E91FBE95247E5B4D0},
	doi = {10.1017/S0963548324000282},
	abstract = {We study computational aspects of repulsive Gibbs point processes, which are probabilistic models of interacting particles in a finite-volume region of space. We introduce an approach for reducing a Gibbs point process to the hard-core model, a well-studied discrete spin system. Given an instance of such a point process, our reduction generates a random graph drawn from a natural geometric model. We show that the partition function of a hard-core model on graphs generated by the geometric model concentrates around the partition function of the Gibbs point process. Our reduction allows us to use a broad range of algorithms developed for the hard-core model to sample from the Gibbs point process and approximate its partition function. This is, to the extent of our knowledge, the first approach that deals with pair potentials of unbounded range.},
	pages = {63--89},
	number = {1},
	journaltitle = {Combinatorics, Probability and Computing},
	author = {Friedrich, Tobias and Göbel, Andreas and Katzmann, Maximilian and Krejca, Martin S. and Pappik, Marcus},
	urldate = {2026-01-18},
	date = {2025-01},
	langid = {english},
	keywords = {60G55, 68W20, 82B21, approximate sampling, Gibbs point processes, Partition functions, random graphs},
	file = {Full Text PDF:C\:\\Users\\rouau\\Zotero\\storage\\6JGWJ4IM\\Friedrich et al. - 2025 - Sampling repulsive Gibbs point processes using random graphs.pdf:application/pdf},
}

@article {chafai2014first,
    AUTHOR = {Chafa\"i, Djalil and Gozlan, Nathael and Zitt, Pierre-Andr\'{e}},
     TITLE = {First-order global asymptotics for confined particles with
              singular pair repulsion},
   JOURNAL = {Ann. Appl. Probab.},
  FJOURNAL = {The Annals of Applied Probability},
    VOLUME = {24},
      YEAR = {2014},
    NUMBER = {6},
     PAGES = {2371--2413},
      ISSN = {1050-5164},
   MRCLASS = {82C22 (31B15 60F10 60K35)},
  MRNUMBER = {3262506},
}

@article {chafai2018concentration,
    AUTHOR = {Chafa\"i, Djalil and Hardy, Adrien and Ma\"ida, Myl\`ene},
     TITLE = {Concentration for {C}oulomb gases and {C}oulomb transport
              inequalities},
   JOURNAL = {J. Funct. Anal.},
  FJOURNAL = {Journal of Functional Analysis},
    VOLUME = {275},
      YEAR = {2018},
    NUMBER = {6},
     PAGES = {1447--1483},
      ISSN = {0022-1236},
   MRCLASS = {82D05 (26D10 28A35 28C15 60E15 82B44)},
  MRNUMBER = {3820329},
}

@article{pronzato2020bayesian,
  title={Bayesian quadrature, energy minimization, and space-filling design},
  author={Pronzato, Luc and Zhigljavsky, Anatoly},
  journal={SIAM/ASA Journal on Uncertainty Quantification},
  volume={8},
  number={3},
  pages={959--1011},
  year={2020},
  publisher={SIAM}
}

@article {gretton2010hilbert,
    AUTHOR = {Sriperumbudur, Bharath K. and Gretton, Arthur and Fukumizu,
              Kenji and Sch\"{o}lkopf, Bernhard and Lanckriet, Gert R. G.},
     TITLE = {Hilbert space embeddings and metrics on probability measures},
   JOURNAL = {J. Mach. Learn. Res.},
  FJOURNAL = {Journal of Machine Learning Research (JMLR)},
    VOLUME = {11},
      YEAR = {2010},
     PAGES = {1517--1561},
      ISSN = {1532-4435},
   MRCLASS = {60B11 (60B10 60B15 62G99)},
  MRNUMBER = {2645460},
MRREVIEWER = {Evarist Gin\'{e}},
}

@article{pronzato2023performance,
  title={Performance analysis of greedy algorithms for minimising a {M}aximum {M}ean {D}iscrepancy},
  author={Pronzato, Luc},
  journal={Statistics and Computing},
  volume={33},
  number={1},
  pages={14},
  year={2023},
  publisher={Springer}
}

@article{alquier2016properties,
  title={On the properties of variational approximations of {G}ibbs posteriors},
  author={Alquier, Pierre and Ridgway, James and Chopin, Nicolas},
  journal={The Journal of Machine Learning Research},
  volume={17},
  number={1},
  pages={8374--8414},
  year={2016},
  publisher={JMLR. org}
}

@article{garcia2022generalized,
  title = {Generalized transport inequalities and concentration bounds for Riesz-type gases},
	url = {http://arxiv.org/abs/2209.00587},
	doi = {10.48550/arXiv.2209.00587},
	abstract = {This paper explores the connection between a generalized Riesz electric energy and norms on the set of probability measures defined in terms of duality. We derive functional inequalities linking these two notions, recovering and generalizing existing Coulomb transport inequalities. We then use them to prove concentration of measure around the equilibrium and thermal equilibrium measures. Finally, we leverage these concentration inequalities to obtain Moser-Trudinger-type inequalities, which may also be interpreted as bounds on the Laplace transform of fluctuations.},
	number = {{arXiv}:2209.00587},
	publisher = {{arXiv}},
	author = {García-Zelada, David and Padilla-Garza, David},
	urldate = {2024-11-25},
	date = {2023-01-23},
	eprinttype = {arxiv},
	eprint = {2209.00587},
	keywords = {Mathematics - Probability},
	file = {Preprint PDF:C\:\\Users\\rouau\\Zotero\\storage\\T73HVSDG\\García-Zelada et Padilla-Garza - 2023 - Generalized transport inequalities and concentration bounds for Riesz-type gases.pdf:application/pdf;Snapshot:C\:\\Users\\rouau\\Zotero\\storage\\CYNHEXXE\\2209.html:text/html},
}

@inproceedings{welling2012herding_part,
  	location = {Arlington, Virginia, {USA}},
	title = {Herding dynamic weights for partially observed random field models},
	isbn = {978-0-9749039-5-8},
	series = {{UAI} '09},
	abstract = {Learning the parameters of a (potentially partially observable) random field model is intractable in general. Instead of focussing on a single optimal parameter value we propose to treat parameters as dynamical quantities. We introduce an algorithm to generate complex dynamics for parameters and (both visible and hidden) state vectors. We show that under certain conditions averages computed over trajectories of the proposed dynamical system converge to averages computed over the data. Our "herding dynamics" does not require expensive operations such as exponentiation and is fully deterministic.},
	pages = {599--606},
	booktitle = {Proceedings of the Twenty-Fifth Conference on Uncertainty in Artificial Intelligence},
	publisher = {{AUAI} Press},
	author = {Welling, Max},
	urldate = {2024-11-25},
	date = {2009-06-18},
	file = {Full Text PDF:C\:\\Users\\rouau\\Zotero\\storage\\LVMM8L95\\Welling - 2009 - Herding dynamic weights for partially observed random field models.pdf:application/pdf},
}

@inproceedings{welling2012herding,
author = {Welling, Max},
title = {Herding {D}ynamical {W}eights to {L}earn},
year = {2009},
isbn = {9781605585161},
publisher = {Association for Computing Machinery},
address = {New York, NY, USA},
booktitle = {Proceedings of the 26th Annual International Conference on Machine Learning},
pages = {1121–1128},
numpages = {8},
location = {Montreal, Quebec, Canada},
series = {ICML '09}
}

@InProceedings{chen2010parametric,
  title = 	 {Parametric {H}erding},
  author = 	 {Chen, Yutian and Welling, Max},
  booktitle = 	 {Proceedings of the Thirteenth International Conference on Artificial Intelligence and Statistics},
  pages = 	 {97--104},
  year = 	 {2010},
  editor = 	 {Teh, Yee Whye and Titterington, Mike},
  volume = 	 {9},
  series = 	 {Proceedings of Machine Learning Research},
  address = 	 {Chia Laguna Resort, Sardinia, Italy},
  month = 	 {13--15 May},
  publisher =    {PMLR},
}

@inproceedings{chen2010super,
author = {Chen, Yutian and Welling, Max and Smola, Alex},
title = {Super-{S}amples from {K}ernel {H}erding},
year = {2010},
isbn = {9780974903965},
publisher = {AUAI Press},
address = {Arlington, Virginia, USA},
booktitle = {Proceedings of the Twenty-Sixth Conference on Uncertainty in Artificial Intelligence},
pages = {109–116},
numpages = {8},
location = {Catalina Island, CA},
series = {UAI'10}
}

@article{liu2016stein,
  title={Stein {V}ariational {G}radient {D}escent: {A} {G}eneral {P}urpose {B}ayesian {I}nference {A}lgorithm},
  author={Liu, Qiang and Wang, Dilin},
  journal={Advances in neural information processing systems},
  volume={29},
  year={2016}
}

@article{blei2017variational,
  title={Variational inference: {A} review for statisticians},
  author={Blei, David M and Kucukelbir, Alp and McAuliffe, Jon D},
  journal={Journal of the American statistical Association},
  volume={112},
  number={518},
  pages={859--877},
  year={2017},
  publisher={Taylor \& Francis}
}

@article{dick2013qmc,
  title={High-dimensional integration: the quasi-{M}onte {C}arlo way},
  author={Dick, Josef and Kuo, Frances Y and Sloan, Ian H},
  journal={Acta Numerica},
  volume={22},
  pages={133--288},
  year={2013},
  publisher={Cambridge University Press}
}

@book {robert2004monte,
    AUTHOR = {Robert, Christian P. and Casella, George},
     TITLE = {Monte {C}arlo statistical methods},
    SERIES = {Springer Texts in Statistics},
   EDITION = {Second},
 PUBLISHER = {Springer-Verlag, New York},
      YEAR = {2004},
     PAGES = {xxx+645},
      ISBN = {0-387-21239-6},
   MRCLASS = {62-01 (60J10 62F15 65Cxx)},
  MRNUMBER = {2080278},
MRREVIEWER = {Petru P. Blaga},
}

@article{bolley2007quantitative,
  title={Quantitative concentration inequalities for empirical measures on non-compact spaces},
  author={Bolley, Fran{\c{c}}ois and Guillin, Arnaud and Villani, C{\'e}dric},
  journal={Probability Theory and Related Fields},
  volume={137},
  pages={541--593},
  year={2007},
  publisher={Springer}
}

@article{fournier2015rate,
  title={On the rate of convergence in {W}asserstein distance of the empirical measure},
  author={Fournier, Nicolas and Guillin, Arnaud},
  journal={Probability theory and related fields},
  volume={162},
  number={3-4},
  pages={707--738},
  year={2015},
  publisher={Springer}
}

@inproceedings{bach2012herding,
author = {Bach, Francis and Lacoste-Julien, Simon and Obozinski, Guillaume},
title = {On the {E}quivalence between {H}erding and {C}onditional {G}radient {A}lgorithms},
year = {2012},
isbn = {9781450312851},
publisher = {Omnipress},
address = {Madison, WI, USA},
booktitle = {Proceedings of the 29th International Coference on International Conference on Machine Learning},
pages = {1355–1362},
numpages = {8},
location = {Edinburgh, Scotland},
series = {ICML'12}
}

@article{anastasiou2023stein,
author = {Andreas Anastasiou and Alessandro Barp and Fran{\c{c}}ois-Xavier Briol and Bruno Ebner and Robert E. Gaunt and Fatemeh Ghaderinezhad and Jackson Gorham and Arthur Gretton and Christophe Ley and Qiang Liu and Lester Mackey and Chris J. Oates and Gesine Reinert and Yvik Swan},
title = {{Stein’s Method Meets Computational Statistics: A Review of Some Recent Developments}},
volume = {38},
journal = {Statistical Science},
number = {1},
publisher = {Institute of Mathematical Statistics},
pages = {120 -- 139},
keywords = {approximate Markov chain Monte Carlo, control variates, goodness-of-fit testing, likelihood ratio, maximum likelihood estimator, prior sensitivity, sample quality, Stein’s method, variational inference},
year = {2023},
}

@inproceedings{korba2021kernel,
  title={Kernel {S}tein {D}iscrepancy {D}escent},
  author={Korba, Anna and Aubin-Frankowski, Pierre-Cyril and Majewski, Szymon and Ablin, Pierre},
  booktitle={International Conference on Machine Learning},
  pages={5719--5730},
  year={2021},
  organization={PMLR}
}

@article{serfaty2018systems,
title = "Systems with {C}oulomb interaction",
author = "Sylvia Serfaty",
year = "2018",
month = aug,
doi = "10.1090/noti1697",
language = "English (US)",
volume = "65",
pages = "787--788",
journal = "Notices of the American Mathematical Society",
issn = "0002-9920",
publisher = "American Mathematical Society",
number = "7",
}

@article{lambert2022variational,
  title={Variational inference via {W}asserstein gradient flows},
  author={Lambert, Marc and Chewi, Sinho and Bach, Francis and Bonnabel, Silv{\`e}re and Rigollet, Philippe},
  journal={Advances in Neural Information Processing Systems},
  volume={35},
  pages={14434--14447},
  year={2022}
}

@article{korba2020svgd,
  title={A non-asymptotic analysis for {S}tein {V}ariational {G}radient {D}escent},
  author={Korba, Anna and Salim, Adil and Arbel, Michael and Luise, Giulia and Gretton, Arthur},
  journal={Advances in Neural Information Processing Systems},
  volume={33},
  pages={4672--4682},
  year={2020}
}

@article{leble2018fluctuations,
  title={Fluctuations of two dimensional {C}oulomb gases},
  author={Lebl{\'e}, Thomas and Serfaty, Sylvia},
  journal={Geometric and Functional Analysis},
  volume={28},
  pages={443--508},
  year={2018},
  publisher={Springer}
}

@inproceedings{serfaty2023gaussian,
  title={Gaussian fluctuations and free energy expansion for {C}oulomb gases at any temperature},
  author={Serfaty, Sylvia},
  booktitle={Annales de l'Institut Henri Poincare (B) Probabilites et statistiques},
  volume={59},
  number={2},
  pages={1074--1142},
  year={2023},
  organization={Institut Henri Poincar{\'e}}
}

@article{bauerschmidt2016two,
  title={The two-dimensional {C}oulomb plasma: quasi-free approximation and central limit theorem},
  author={Bauerschmidt, Roland and Bourgade, Paul and Nikula, Miika and Yau, Horng-Tzer},
  journal={arXiv preprint arXiv:1609.08582},
  year={2016}
}

@book{murphy2023probabilistic,
	author = {Murphy, K. P.},
	date-added = {2024-01-15 15:41:11 +0100},
	date-modified = {2024-01-15 15:41:24 +0100},
	publisher = {MIT press},
	title = {Probabilistic machine learning: {A}dvanced topics},
	year = {2023}}

@book{robert2007bayesian,
	author = {Robert, C. P.},
	date-added = {2016-10-16 15:23:25 +0000},
	date-modified = {2016-10-16 15:23:44 +0000},
	publisher = {Springer Science \& Business Media},
	title = {The {B}ayesian choice: from decision-theoretic foundations to computational implementation},
	year = {2007}}

@article{hoffman2013stochastic,
  title={Stochastic variational inference},
  author={Hoffman, Matthew D and Blei, David M and Wang, Chong and Paisley, John},
  journal={Journal of Machine Learning Research},
  year={2013}
}

@article{shi2024finite,
  title={A {F}inite-{P}article {C}onvergence {R}ate for {S}tein {V}ariational {G}radient {D}escent},
  author={Shi, Jiaxin and Mackey, Lester},
  journal={Advances in Neural Information Processing Systems},
  volume={36},
  year={2024}
}

@inproceedings{chen2019steinmcmc,
title = {Stein Point Markov Chain Monte Carlo},
	url = {http://arxiv.org/abs/1905.03673},
	doi = {10.48550/arXiv.1905.03673},
	abstract = {An important task in machine learning and statistics is the approximation of a probability measure by an empirical measure supported on a discrete point set. Stein Points are a class of algorithms for this task, which proceed by sequentially minimising a Stein discrepancy between the empirical measure and the target and, hence, require the solution of a non-convex optimisation problem to obtain each new point. This paper removes the need to solve this optimisation problem by, instead, selecting each new point based on a Markov chain sample path. This significantly reduces the computational cost of Stein Points and leads to a suite of algorithms that are straightforward to implement. The new algorithms are illustrated on a set of challenging Bayesian inference problems, and rigorous theoretical guarantees of consistency are established.},
	number = {{arXiv}:1905.03673},
	publisher = {{arXiv}},
	author = {Chen, Wilson Ye and Barp, Alessandro and Briol, François-Xavier and Gorham, Jackson and Girolami, Mark and Mackey, Lester and Oates, Chris J.},
	urldate = {2024-11-25},
	date = {2020-09-14},
	eprinttype = {arxiv},
	eprint = {1905.03673},
	keywords = {Mathematics - Statistics Theory, Statistics - Computation, Statistics - Statistics Theory, Statistics - Methodology, Statistics - Machine Learning},
	file = {Preprint PDF:C\:\\Users\\rouau\\Zotero\\storage\\SJ86K2YN\\Chen et al. - 2020 - Stein Point Markov Chain Monte Carlo.pdf:application/pdf;Snapshot:C\:\\Users\\rouau\\Zotero\\storage\\9R3JLNQC\\1905.html:text/html},
}

@article{chen2018steinpoints,
title = {Stein {P}oints},
	url = {http://arxiv.org/abs/1803.10161},
	doi = {10.48550/arXiv.1803.10161},
	abstract = {An important task in computational statistics and machine learning is to approximate a posterior distribution \$p(x)\$ with an empirical measure supported on a set of representative points \${\textbackslash}\{x\_i{\textbackslash}\}\_\{i=1\}{\textasciicircum}n\$. This paper focuses on methods where the selection of points is essentially deterministic, with an emphasis on achieving accurate approximation when \$n\$ is small. To this end, we present `Stein Points'. The idea is to exploit either a greedy or a conditional gradient method to iteratively minimise a kernel Stein discrepancy between the empirical measure and \$p(x)\$. Our empirical results demonstrate that Stein Points enable accurate approximation of the posterior at modest computational cost. In addition, theoretical results are provided to establish convergence of the method.},
	number = {{arXiv}:1803.10161},
	publisher = {{arXiv}},
	author = {Chen, Wilson Ye and Mackey, Lester and Gorham, Jackson and Briol, François-Xavier and Oates, Chris J.},
	urldate = {2024-11-25},
	date = {2018-06-19},
	eprinttype = {arxiv},
	eprint = {1803.10161},
	keywords = {Statistics - Computation, Statistics - Machine Learning, Computer Science - Machine Learning},
	file = {Preprint PDF:C\:\\Users\\rouau\\Zotero\\storage\\ZLQL8NH6\\Chen et al. - 2018 - Stein Points.pdf:application/pdf;Snapshot:C\:\\Users\\rouau\\Zotero\\storage\\DWVENR2Q\\1803.html:text/html},
}

@article{dwivedi2019mixing,
  author  = {Raaz Dwivedi and Yuansi Chen and Martin J. Wainwright and Bin Yu},
  title   = {Log-concave sampling: {M}etropolis-{H}astings algorithms are fast},
  journal = {Journal of Machine Learning Research},
  year    = {2019},
  volume  = {20},
  number  = {183},
  pages   = {1--42},
}

@article{shi2021sampling,
  title={Sampling with {M}irrored {S}tein {O}perators},
  author={Shi, Jiaxin and Liu, Chang and Mackey, Lester},
  journal={arXiv preprint arXiv:2106.12506},
  year={2021}
}

@article{oates2017control,
  title={Control functionals for {M}onte {C}arlo integration},
  author={Oates, Chris J and Girolami, Mark and Chopin, Nicolas},
  journal={Journal of the Royal Statistical Society Series B: Statistical Methodology},
  volume={79},
  number={3},
  pages={695--718},
  year={2017},
  publisher={Oxford University Press}
}

@article{oates2019convergence,
	title = {Convergence Rates for a Class of Estimators Based on Stein's Method},
	url = {http://arxiv.org/abs/1603.03220},
	doi = {10.48550/arXiv.1603.03220},
	abstract = {Gradient information on the sampling distribution can be used to reduce the variance of Monte Carlo estimators via Stein's method. An important application is that of estimating an expectation of a test function along the sample path of a Markov chain, where gradient information enables convergence rate improvement at the cost of a linear system which must be solved. The contribution of this paper is to establish theoretical bounds on convergence rates for a class of estimators based on Stein's method. Our analysis accounts for (i) the degree of smoothness of the sampling distribution and test function, (ii) the dimension of the state space, and (iii) the case of non-independent samples arising from a Markov chain. These results provide insight into the rapid convergence of gradient-based estimators observed for low-dimensional problems, as well as clarifying a curse-of-dimension that appears inherent to such methods.},
	number = {{arXiv}:1603.03220},
	publisher = {{arXiv}},
	author = {Oates, Chris J. and Cockayne, Jon and Briol, François-Xavier and Girolami, Mark},
	urldate = {2024-11-25},
	date = {2017-12-27},
	eprinttype = {arxiv},
	eprint = {1603.03220},
	keywords = {Mathematics - Statistics Theory, Statistics - Statistics Theory},
	file = {Preprint PDF:C\:\\Users\\rouau\\Zotero\\storage\\FDQTMYL7\\Oates et al. - 2017 - Convergence Rates for a Class of Estimators Based on Stein's Method.pdf:application/pdf;Snapshot:C\:\\Users\\rouau\\Zotero\\storage\\A5MGGUDH\\1603.html:text/html},
}

@article{benard2024kernel,
  title={Kernel {S}tein {D}iscrepancy thinning: a theoretical perspective of pathologies and a practical fix with regularization},
  author={B{\'e}nard, Cl{\'e}ment and Staber, Brian and Da Veiga, S{\'e}bastien},
  journal={Advances in Neural Information Processing Systems},
  volume={36},
  year={2024}
}

@article{mak2018support,
author = {Simon Mak and V. Roshan Joseph},
title = {{Support points}},
volume = {46},
journal = {The Annals of Statistics},
number = {6A},
publisher = {Institute of Mathematical Statistics},
pages = {2562 -- 2592},
keywords = {Bayesian computation, energy distance, Monte Carlo, numerical integration, quasi-Monte Carlo, representative points},
year = {2018},
doi = {10.1214/17-AOS1629},
}

@misc{li2023sampling,
      title={Sampling with {M}ollified {I}nteraction {E}nergy {D}escent}, 
      author={Lingxiao Li and Qiang Liu and Anna Korba and Mikhail Yurochkin and Justin Solomon},
      year={2023},
      eprint={2210.13400},
      archivePrefix={arXiv},
      url = {https://arxiv.org/abs/2210.13400},
      primaryClass={stat.ML}
}

@software{jax2018github,
  author = {James Bradbury and Roy Frostig and Peter Hawkins and Matthew James Johnson and Chris Leary and Dougal Maclaurin and George Necula and Adam Paszke and Jake Vander{P}las and Skye Wanderman-{M}ilne and Qiao Zhang},
  title = {{JAX}: composable transformations of {P}ython+{N}um{P}y programs},
  version = {0.3.13},
   url = {http://github.com/google/jax},
  year = {2018},
}

@article{maida2014free,
  title={Free transport-entropy inequalities for non-convex potentials and application to concentration for random matrices},
  author={Ma\"ida, Myl{\`e}ne and Maurel-Segala, {\'E}douard},
  journal={Probability Theory and Related Fields},
  volume={159},
  number={1},
  pages={329--356},
  year={2014},
  publisher={Springer}
}

@Article{purves2013biology,
  author={Drew Purves and J\"orn P. W. Scharlemann and Mike Harfoot and Tim Newbold and Derek P. Tittensor and Jon Hutton and Stephen Emmott},
  title={{Time to model all life on Earth}},
  journal={Nature},
  year=2013,
  volume={493},
  number={7432},
  pages={295-297},
  month={January},
  keywords={},
  doi={10.1038/493295a},
  abstract={To help transform our understanding of the biosphere, ecologists — like climate scientists — should simulate whole ecosystems, argue Drew Purves and colleagues.},
}

@article{johnstone2016uncertainty,
  title={Uncertainty and variability in models of the cardiac action potential: Can we build trustworthy models?},
  author={Johnstone, Ross H and Chang, Eugene TY and Bardenet, R{\'e}mi and De Boer, Teun P and Gavaghan, David J and Pathmanathan, Pras and Clayton, Richard H and Mirams, Gary R},
  journal={Journal of molecular and cellular cardiology},
  volume={96},
  pages={49--62},
  year={2016},
  publisher={Elsevier}
}

@misc{priser2024longtimeasymptoticsnoisysvgd,
      title={Long-time asymptotics of noisy SVGD outside the population limit}, 
      author={Victor Priser and Pascal Bianchi and Adil Salim},
      year={2024},
      eprint={2406.11929},
      archivePrefix={arXiv},
      primaryClass={cs.LG},
}

@article{dwivedi2024kernel,
  title={Kernel Thinning},
  author={Dwivedi, Raaz and Mackey, Lester},
  journal={Journal of Machine Learning Research},
  volume={25},
  number={152},
  pages={1--77},
  year={2024}
}

@article{dwivedi2021generalized,
  title={Generalized kernel thinning},
  author={Dwivedi, Raaz and Mackey, Lester},
  journal={arXiv preprint arXiv:2110.01593},
  year={2021}
}

@misc{li2024debiaseddistributioncompression,
      title={Debiased Distribution Compression}, 
      author={Lingxiao Li and Raaz Dwivedi and Lester Mackey},
      year={2024},
      eprint={2404.12290},
      archivePrefix={arXiv},
      primaryClass={stat.ML},
}

@misc{shetty2022distributioncompressionnearlineartime,
      title={Distribution Compression in Near-linear Time}, 
      author={Abhishek Shetty and Raaz Dwivedi and Lester Mackey},
      year={2022},
      eprint={2111.07941},
      archivePrefix={arXiv},
      primaryClass={stat.ML},
}

@inproceedings{gorham2017measuring,
  title={Measuring sample quality with kernels},
  author={Gorham, Jackson and Mackey, Lester},
  booktitle={International Conference on Machine Learning},
  pages={1292--1301},
  year={2017},
  organization={PMLR}
}

@article{barp2022targeted,
  title = {Targeted Separation and Convergence with Kernel Discrepancies},
	url = {http://arxiv.org/abs/2209.12835},
	doi = {10.48550/arXiv.2209.12835},
	abstract = {Maximum mean discrepancies ({MMDs}) like the kernel Stein discrepancy ({KSD}) have grown central to a wide range of applications, including hypothesis testing, sampler selection, distribution approximation, and variational inference. In each setting, these kernel-based discrepancy measures are required to (i) separate a target P from other probability measures or even (ii) control weak convergence to P. In this article we derive new sufficient and necessary conditions to ensure (i) and (ii). For {MMDs} on separable metric spaces, we characterize those kernels that separate Bochner embeddable measures and introduce simple conditions for separating all measures with unbounded kernels and for controlling convergence with bounded kernels. We use these results on \${\textbackslash}mathbb\{R\}{\textasciicircum}d\$ to substantially broaden the known conditions for {KSD} separation and convergence control and to develop the first {KSDs} known to exactly metrize weak convergence to P. Along the way, we highlight the implications of our results for hypothesis testing, measuring and improving sample quality, and sampling with Stein variational gradient descent.},
	number = {{arXiv}:2209.12835},
	publisher = {{arXiv}},
	author = {Barp, Alessandro and Simon-Gabriel, Carl-Johann and Girolami, Mark and Mackey, Lester},
	urldate = {2024-11-25},
	date = {2024-10-22},
	eprinttype = {arxiv},
	eprint = {2209.12835},
	keywords = {Mathematics - Statistics Theory, Statistics - Statistics Theory, Statistics - Machine Learning, Computer Science - Machine Learning},
	file = {Preprint PDF:C\:\\Users\\rouau\\Zotero\\storage\\6GBIU5GC\\Barp et al. - 2024 - Targeted Separation and Convergence with Kernel Discrepancies.pdf:application/pdf;Snapshot:C\:\\Users\\rouau\\Zotero\\storage\\2SP6LHFK\\2209.html:text/html},
}

@misc{kanagawa2024controllingmomentskernelstein,
      title={Controlling Moments with Kernel Stein Discrepancies}, 
      author={Heishiro Kanagawa and Alessandro Barp and Arthur Gretton and Lester Mackey},
      year={2024},
      eprint={2211.05408},
      archivePrefix={arXiv},
      primaryClass={stat.ML},
}

@article{ma2019sampling,
author = {Yi-An Ma  and Yuansi Chen  and Chi Jin  and Nicolas Flammarion  and Michael I. Jordan },
title = {Sampling can be faster than optimization},
journal = {Proceedings of the National Academy of Sciences},
volume = {116},
number = {42},
pages = {20881-20885},
year = {2019},
doi = {10.1073/pnas.1820003116},
URL = {https://www.pnas.org/doi/abs/10.1073/pnas.1820003116},
eprint = {https://www.pnas.org/doi/pdf/10.1073/pnas.1820003116}}

@article{briol2019probabilistic,
author = {Fran{\c{c}}ois-Xavier Briol and Chris J. Oates and Mark Girolami and Michael A. Osborne and Dino Sejdinovic},
title = {{Probabilistic Integration: A Role in Statistical Computation?}},
volume = {34},
journal = {Statistical Science},
number = {1},
publisher = {Institute of Mathematical Statistics},
pages = {1 -- 22},
keywords = {computational statistics, nonparametric statistics, probabilistic numerics, uncertainty quantification},
year = {2019},
doi = {10.1214/18-STS660},
URL = {https://doi.org/10.1214/18-STS660}
}

@article{tolstikhin2016minimax,
  title={Minimax estimation of maximum mean discrepancy with radial kernels},
  author={Tolstikhin, Ilya O and Sriperumbudur, Bharath K and Sch{\"o}lkopf, Bernhard},
  journal={Advances in Neural Information Processing Systems},
  volume={29},
  year={2016}
}

@book{RaWi06,
	address = {Cambridge, MA, USA},
	author = {Rasmussen, C.E. and Williams, C.K.I.},
	date-added = {2024-01-24 18:20:06 +0100},
	date-modified = {2024-01-24 18:20:06 +0100},
	month = jan,
	month_numeric = {1},
	organization = {Max-Planck-Gesellschaft},
	pages = {248},
	publisher = {MIT Press},
	school = {Biologische Kybernetik},
	series = {Adaptive Computation and Machine Learning},
	title = {Gaussian Processes for Machine Learning},
	year = {2006}}

@article{BeHa19,
	author = {Beltr\'an, C. and Hardy, A.},
	date-added = {2022-09-19 12:52:59 +0200},
	date-modified = {2022-09-19 12:55:46 +0200},
	journal = {Archive for Rational Mechanics and Analysis},
	number = {3},
	pages = {2007--2017},
	publisher = {Springer},
	title = {Energy of the {C}oulomb gas on the sphere at low temperature},
	volume = {231},
	year = {2019}}

@book{BeTh11,
	author = {Berlinet, A. and Thomas-Agnan, Ch.},
	date-added = {2021-08-26 11:41:56 +0200},
	date-modified = {2021-08-26 11:41:56 +0200},
	publisher = {Springer Science \& Business Media},
	title = {Reproducing kernel {H}ilbert spaces in probability and statistics},
	year = {2011}}

@inproceedings{BeBaCh20,
	author = {Belhadji, A. and Bardenet, R. and Chainais, P.},
	booktitle = {International Conference on Machine Learning (ICML)},
	date-added = {2020-06-11 12:35:59 +0200},
	date-modified = {2020-06-11 12:36:36 +0200},
	journal = {\href{https://arxiv.org/abs/1906.07832}{Arxiv preprint:1906.07832}},
	title = {Kernel interpolation with continuous volume sampling},
	year = {2020}}
\appendix

\section{A note on the physics-inspired vocabulary}
\label{app:physics_vocabulary}
Gibbs measures are intimately connected to statistical physics, and it helps the intuition to have in mind the physical situation behind the names, for instance, of the different notions of energy introduced in Definition 3.1. 
In a world where the Coulomb interaction is given by $K$, $U_{K}^{\mu}(z)$ would be the electric potential created at point $z$ by charges distributed according to $\mu$.
In the same way, $I_{K}^{V}(\mu)$ is the energy of charges distributed according to $\mu$, repelling each other according to $K$, and confined by some external potential $V.$ 



\section{Additional discussion and experiments}
\label{a:comput}

\subsection{Computational clarifications on sampling from $\PP$}
\label{a:comput:complexity}

To clear one's mind, we first discuss how our approximate sampling algorithm for $\PP$ runs compared to vanilla MCMC with target $\pi$.
We approximatively sample from $\PP$ by running a number $T$ of MALA iterations, and consider the last $T-$th iteration as our approximate sample $(y_i)_{1 \leq i \leq n}$, where each $y_i$ belongs to $\mathbb{R}^{d}$.
On the other hand, the vanilla MCMC to which we compare runs a Markov chain on $\mathbb{R}^{d}$ with target $\pi$ and $n$ iterations, and averages over the chain to approximate the target integral, thus using $n$ quadrature nodes in $\mathbb{R}^{d}$ as well.
Of course, our MALA chain runs in much higher dimension ($d\times n$), but we emphasize that we are not comparing the convergence of this MALA chain to $\PP$ versus vanilla MCMC with target $\pi$, but rather how our estimator based on a single sample from $\PP$ (obtained after $T \gg n$ MALA iterations) compares to the estimator averaging over the $n$ vanilla MCMC iterations.

 Our procedure clearly requires more CPU time, yet what we are comparing is the performance of both rules as a function of $n$. 
This is again fair in the setting of expensive-to-evaluate integrands $f$, where the computational bottleneck is the number $n$ of times that we have to evaluate $f$ and one can usually afford a larger CPU time to pick the nodes. 
The important point is that the output $n$ nodes result in a confidence interval for integral approximation that is smaller than with $n$ ``classical" MCMC iterations, which is what our results claim.

Expensive-to-evaluate integrands is a standard setting for deterministic kernel-based particle methods such as kernel herding or Stein variants, where the algorithmic complexity of each of the $T \gg n$ iterates is often in $\mathcal{O}(n^2)$ \citep{korba2021kernel, chen2018steinpoints, chen2019steinmcmc}. Our approach also runs in $\mathcal{O}(n^2)$ per iteration since we need to compute pairwise interactions at each update.

\subsection{Approximation of the kernel embedding}
\label{a:comput:kernel_approx}

Evaluating the kernel embedding is key to design the Gibbs measure $\mathbb{P}_{n, \beta_n}^{V^{\pi}}$ that concentrates around $\pi$. 
Since $U_{K}^{\pi}(z)$ does not usually have a closed form, a standard way \citep{pronzato2020bayesian} to approximate it is to run a long Metropolis--Hastings chain of length $M$ targeting $\pi$ and average $K(z, .)$ along the chain.
Recall that this is fair for expensive-to-evaluate target integrands $f$, since no additional evaluation of $f$ is involved.
From the computational point of view, since any evaluation of $U_{K}^{\pi}$ needs to be replaced by a sum of size $M$, the computational complexity of each iteration of MALA becomes $\mathcal{O}(n^2 + nM)$.

If one insists on avoiding having to approximate $U_{K}^{\pi}$, the Stein kernel $K_{\pi}$ associated to $K$ and $\pi$ has emerged as a tool to bypass this approximation \citep{anastasiou2023stein, oates2017control} when $\pi$ is either fully supported on $\mathbb{R}^{d}$, or when $\pi$ has compact support up to a smooth modification of the kernel $K$ \citep{oates2019convergence}, so that $U_{K_{\pi}}^{\pi}(z) = 0$ for $\pi-$almost all $z$ .
Other techniques in the latter case involve mirrored Stein operators \citep{shi2021sampling} or a mollified version of the energy \citep{li2023sampling}.

Nonetheless, changing $K$ for the induced Stein kernel changes the RKHS of integrands in a nontrivial way, which is not desirable for numerical integration in a RKHS with given kernel $K$.
In the same vein, \citep{benard2024kernel, korba2021kernel} report unintuitive boundary effects as well as biased weights when the target is a simple mixture of fully supported Gaussians, along with some practical fixes.

We thus stick in this paper to the standard Monte Carlo approximation for $U_{K}^{\pi}$. While incorporating this approximation in our theoretical results would be an important (but challenging) feature, we can only state that our experiments on toy models empirically validate this approximation, since we recover known statistical properties of the target Gibbs measure; see Figure~\ref{f:three_energies_var} with a reasonably small number of Monte Carlo iterations $M$.

\subsection{Multimodal example}
\label{a:comput:multimodal}

We set $\pi$ to be a balanced mixture of six two-dimensional truncated Gaussian distributions with radius $0.5$, variance $0.1$ and whose centers are evenly spaced on the unit circle. We stick to the truncated logarithmic interaction kernel as in the first experiment of Section 6, with $n = 1\,000$ and $\beta_n = n^{2}$, while the kernel embedding is approximated with $M = 10\,000$ iterations of MCMC as explained in Section~\ref{a:comput:kernel_approx}.
Such multimodal target distributions are simple challenging toy examples in which one might suffer from a cold start and slow mixing.

If we take $n$ large enough and
an initial distribution with a large support, we can expect a few of the $n$ points to be close to each
mode of $\pi$, and the MALA chain to keep an appropriate proportion in each mode of $\pi$.
However, as for any optimization algorithm targeting an objective like a worst-case integration error, we could suffer from a bad initialization with no particles in an isolated mode of $\pi$, and a large number of MALA iterations might be needed to reach a good approximation of $\PP$ and thus $\pi$; cf. the discussion of Section 6.1.

In Figure~\ref{f:mixture_cold}, we plot the state of the MALA chain at three different iteration numbers, for a cold Gaussian start with mean $0$ and variance $1$. 
We then consider in Figure~\ref{f:mixture_warm_ksd} a warm start obtained through $50,000$ iterations of KSD descent \citep{korba2021kernel}, with Gaussian kernel and whose target is a mixture of Gaussian distributions centered in each of the modes of $\pi$ with variance $0.1$.
Another classical complementary solution, also mentioned by \citet{korba2021kernel} in failure cases (and especially for multimodal targets), is to rely on annealing.
To wit, we introduce a sequence of tempered targets $\pi_k(x) \propto e^{- t_k \log \pi(x)}$, for a sequence $(t_k)_{1 \leq k \leq l}$ going from $0.1$ to $1$.
We successively use the result of the sampling algorithm on $\pi_{k}$ as the initialization for the sampling algorithm on $\pi_{k+1}$, up to $\pi_{l} = \pi$. 
We choose here a naive annealing scheme $l = 10$ and $t_k = k/10$.
We then run our MALA updates \emph{after} $50,000$ iterations of KSD descent for which annealing on $\pi$ is used (\ref{f:mixture_warm_ksd_annealing}), as well as MALA updates from a cold Gaussian start, but progressively annealing $\pi$ in the MALA updates (\ref{f:mixture_cold_annealing}).
In detail, for each $k$, we approximate the kernel embedding of $\pi_{k}$ and then run MALA (for $T/10$ iterations, to keep our global number of MALA iterations constant) targeting the associated Gibbs measure as usual, and use the last iteration to initialize the same MALA updates for $\pi_{k+1}$, up to $\pi_{l} = \pi$ where the target measure of MALA becomes $\PP$.

\begin{figure}
    \centering
    \subfloat[$T = 5,000$]{%
        \includegraphics[width=\threefig]{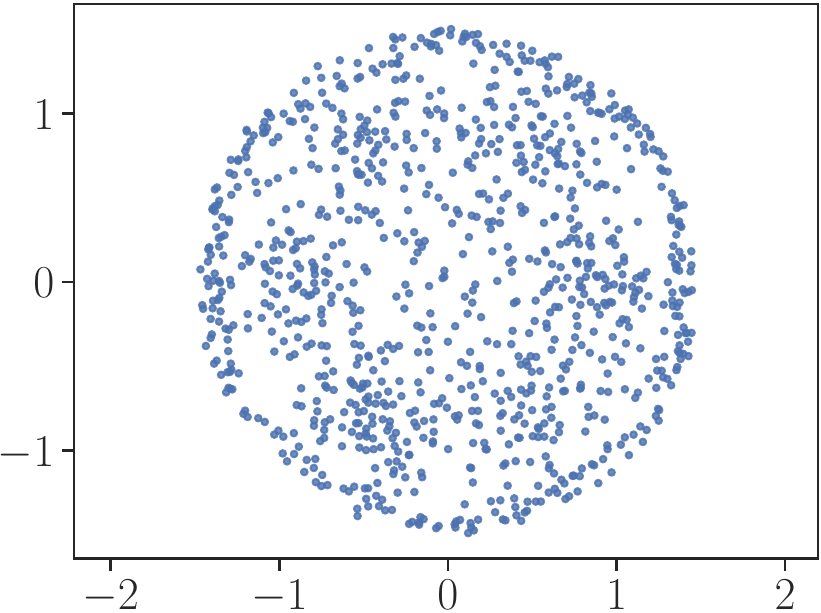}
    }
    \subfloat[$T = 10,000$]{
        \includegraphics[width=\threefig]{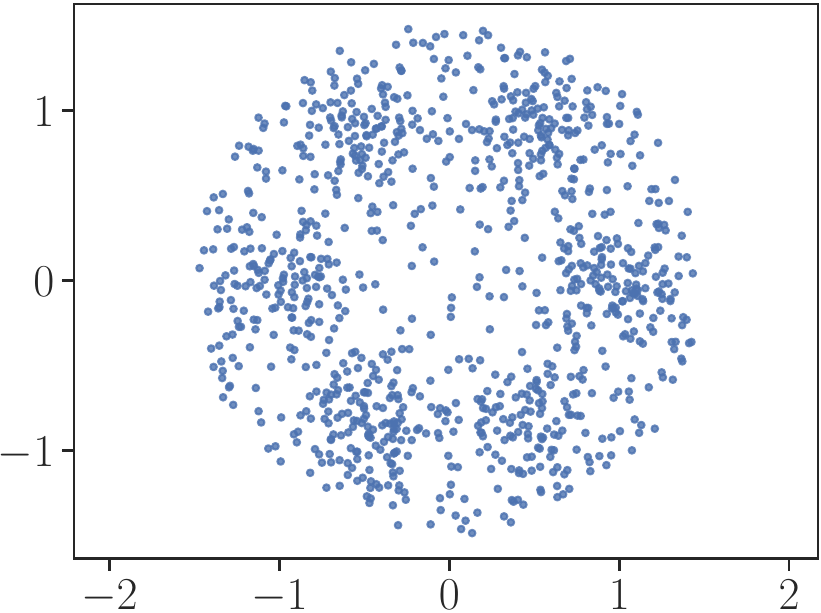}
    }
    \subfloat[$T = 15,000$]{
        \includegraphics[width=\threefig]{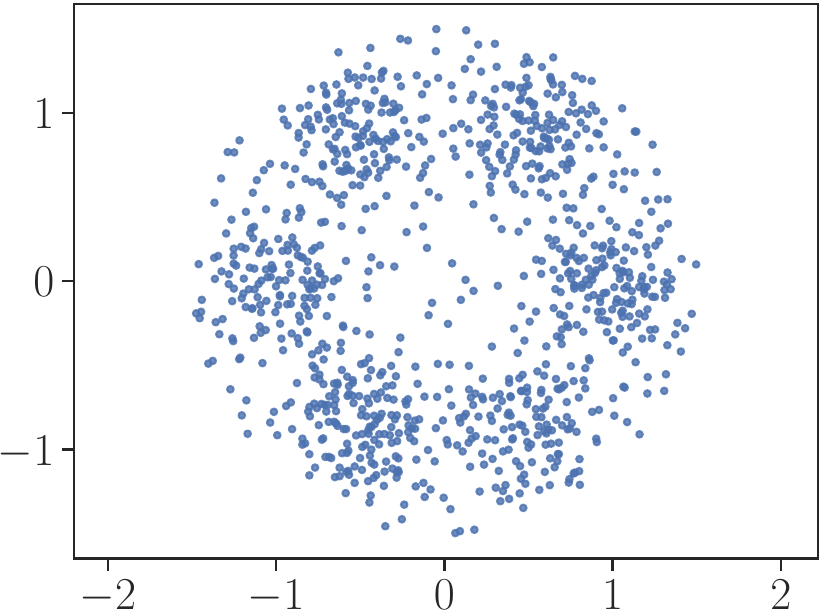}
    }
    \caption{Cold Gaussian start}
    \label{f:mixture_cold}
\end{figure}

\begin{figure}
    \centering
    \subfloat[$T = 5,000$]{%
        \includegraphics[width=\threefig]{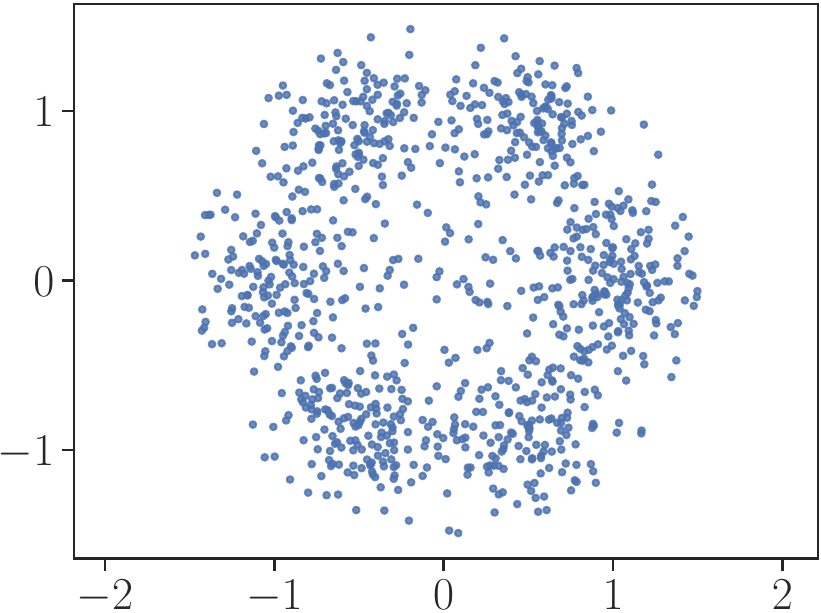}
    }
    \subfloat[$T = 10,000$]{
        \includegraphics[width=\threefig]{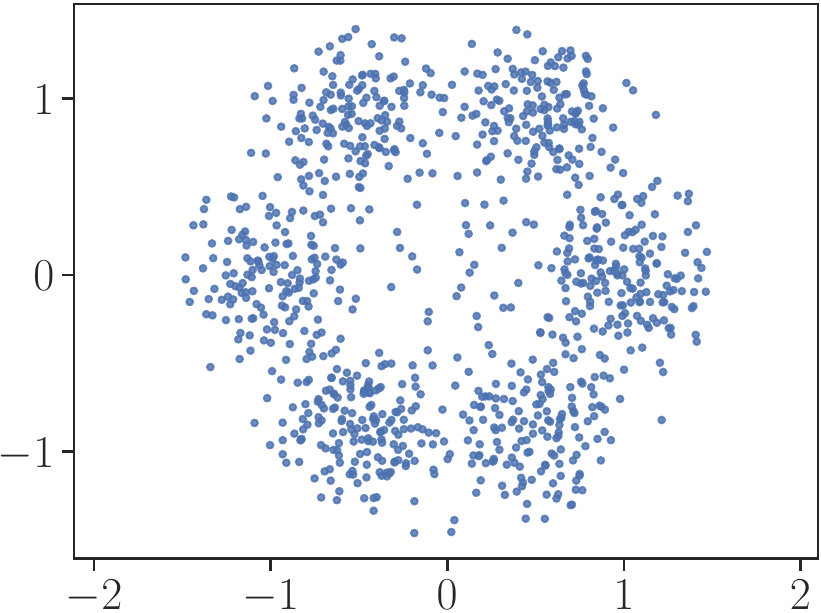}
    }
    \subfloat[$T = 15,000$]{
        \includegraphics[width=\threefig]{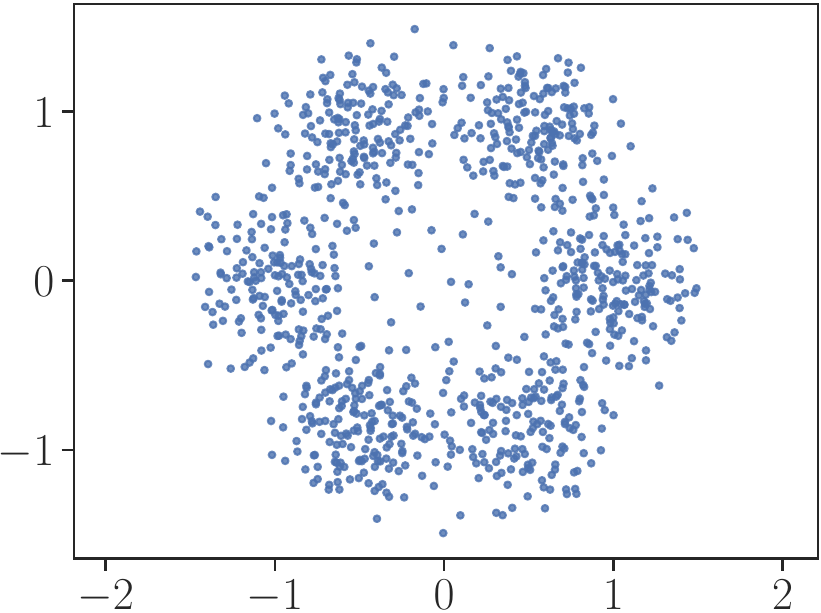}
    }
    \caption{Warm KSD descent start}
    \label{f:mixture_warm_ksd}
\end{figure}

\begin{figure}
    \centering
    \subfloat[$T = 5,000$]{%
        \includegraphics[width=\threefig]{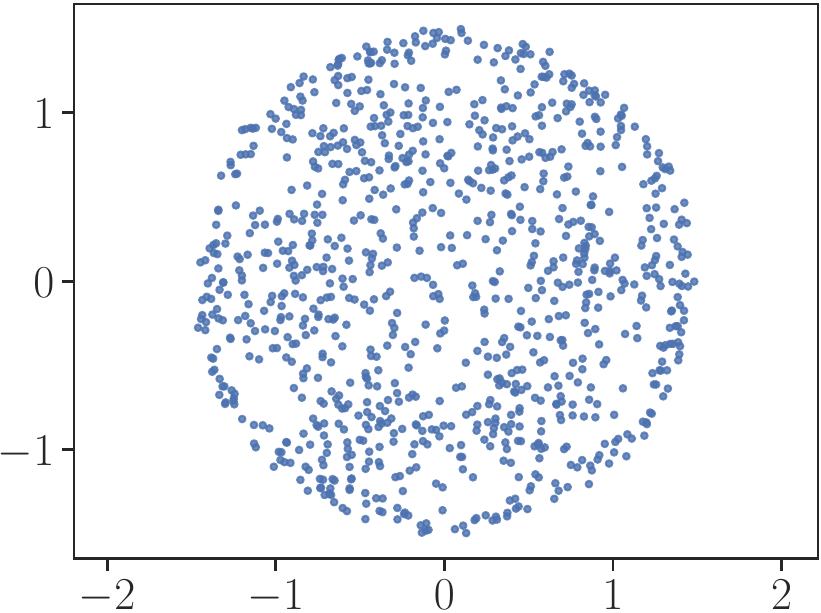}
    }
    \subfloat[$T = 10,000$]{
        \includegraphics[width=\threefig]{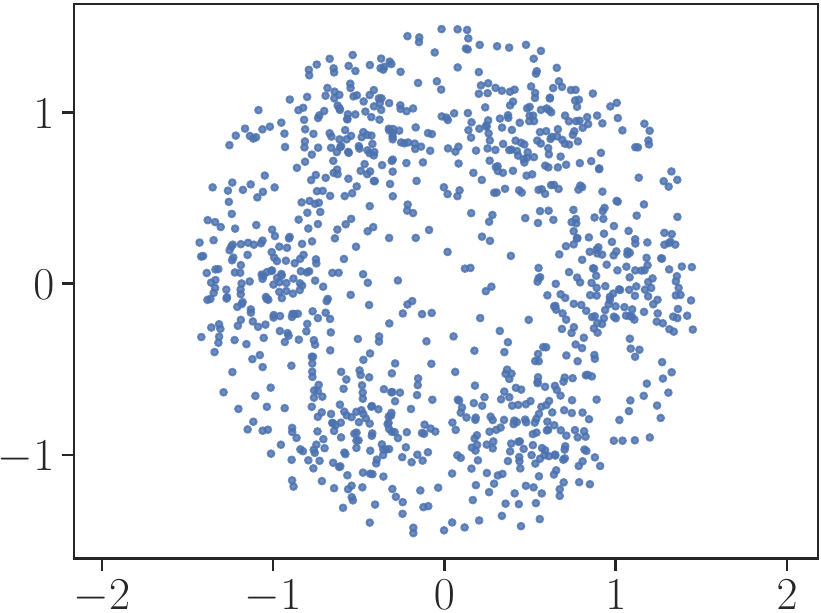}
    }
    \subfloat[$T = 15,000$]{
        \includegraphics[width=\threefig]{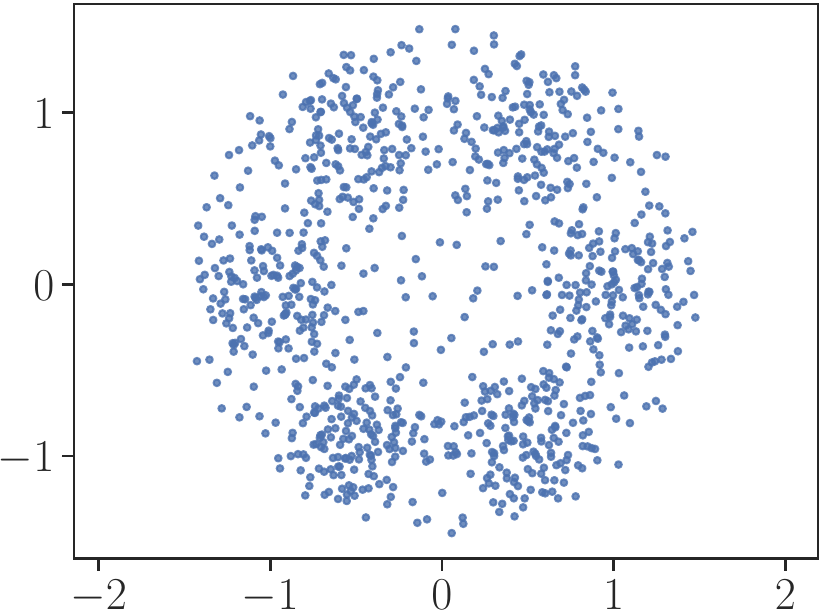}
    }
    \caption{Cold Gaussian start (with annealing on $\pi$)}
    \label{f:mixture_cold_annealing}
\end{figure}

\begin{figure}
    \centering
    \subfloat[$T = 5,000$]{%
        \includegraphics[width=\threefig]{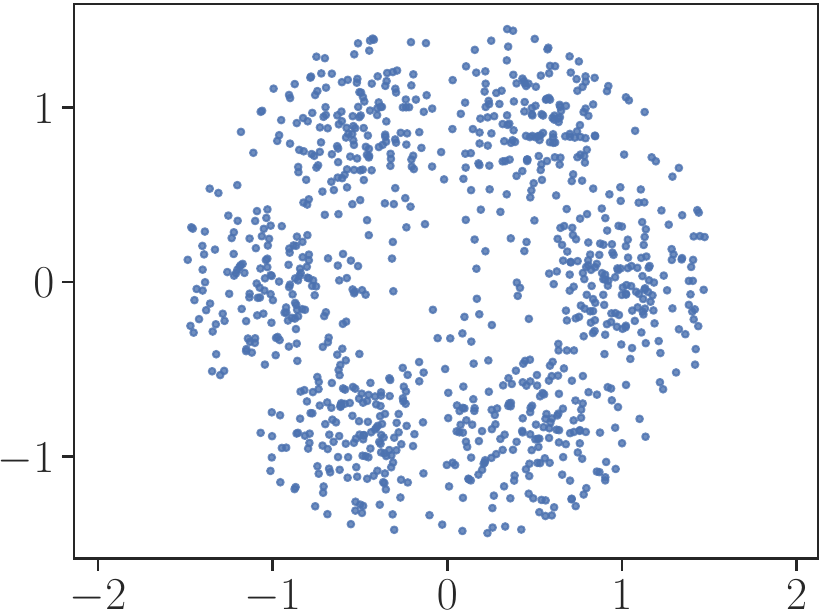}
    }
    \subfloat[$T = 10,000$]{
        \includegraphics[width=\threefig]{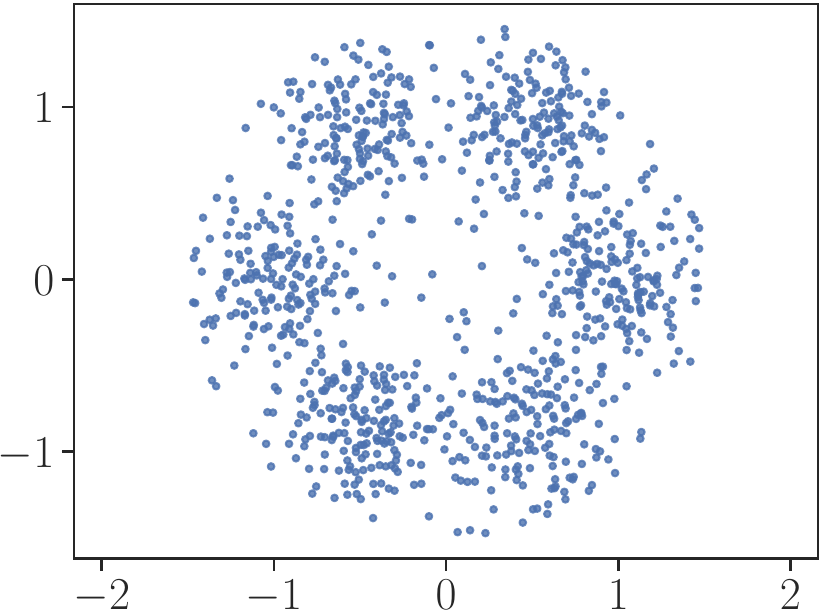}
    }
    \subfloat[$T = 15,000$]{
        \includegraphics[width=\threefig]{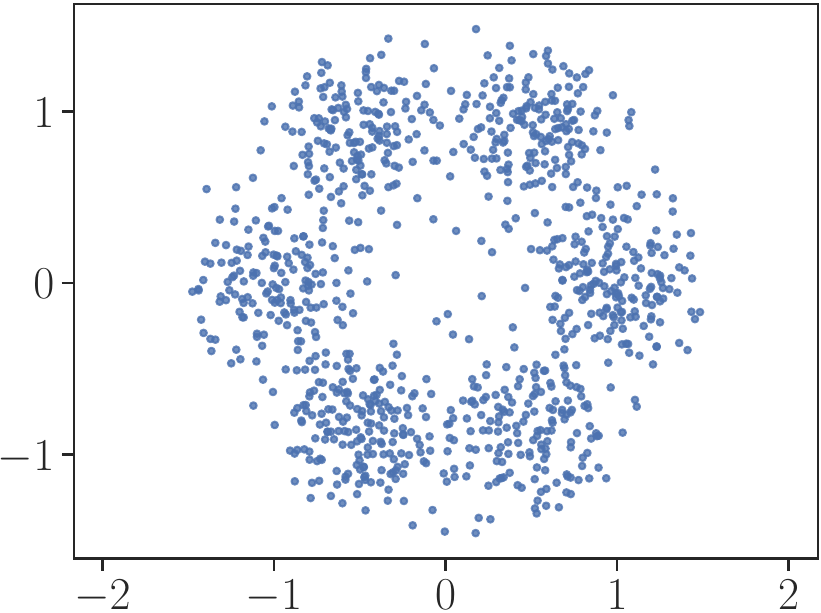}
    }
    \caption{Warm KSD descent start (with annealing on $\pi$)}
    \label{f:mixture_warm_ksd_annealing}
\end{figure}

It appears clear, as expected, that the warm start helps in a sense that fewer iterations are needed to reach a well spread distribution in each of the modes, even if a cold start from a Gaussian distribution remains quite reasonable in that case in terms of number of MALA iterations. 
In this toy model, the effect of annealing on $\pi$ does not seem tremendous.

A general remark is that we do not expect our method to perform well in cases where fast scaling algorithms as SVGD \citep{liu2016stein} or KSD descent \citep{korba2021kernel} fail.
 Another high level comment is that, in challenging
applications of Monte Carlo integration to the physical sciences, we rarely trust an MCMC chain to
find all the modes of a target. We usually repeatedly run a local optimization algorithm beforehand,
with a large number of random initial states, and cluster its output to find the modes of the target.
Then only can we validate that our MCMC chain has visited all modes a sufficient number of times.
Similarly, we only recommend using our algorithm after having heuristically identified all modes of
$\pi$.

\subsection{Decay of the energy for a non uniform target measure}
\label{a:energy_10d}

The setup in Figure~\ref{fig:decay_10d} is the same as in Figure~\ref{f:three_energies_var:energy_riesz} where we compare the energy (or MMD) between our procedure and MCMC, but this time for a non-uniform target measure.
We set $\pi$ to be a truncated Gaussian distribution on the unit ball in dimension $d = 10$ and keep $\beta_n = n^2$.
We observe the same results as in Figure~\ref{f:three_energies_var:energy_riesz}: the energy decay rate is the same as MCMC, but our approach improves on the MMD by about a factor $3$, even in this relatively high dimension ($d = 10$).
This reflects the fact that our $\PP$ is informed by the MMD.

\begin{figure}
  \centering
  \includegraphics[width=0.5\linewidth]{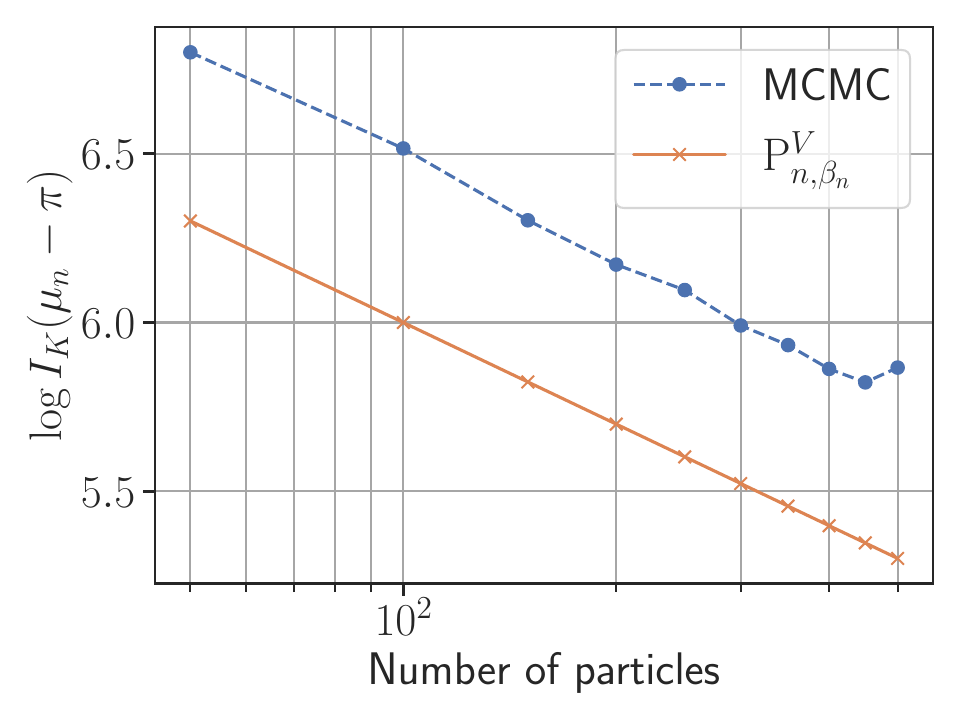}
  \caption{Energy comparison for truncated Riesz kernel in the same setup as Section 4 (Figure 2b), for a truncated Gaussian distribution $\pi$ on the unit ball in dimension $d = 10$.}
  \label{fig:decay_10d}
\end{figure}

\end{document}